\newtheorem{prop}{Proposition}
\newcommand{\code}[1]{\texttt{#1}}
\newcommand{\vect}[1]{\mathbf{#1}}
\newcolumntype{L}[1]{>{\raggedright\let\newline\\\arraybackslash\hspace{0pt}}m{#1}}
\newcolumntype{C}[1]{>{\centering\let\newline  \\\arraybackslash\hspace{0pt}}m{#1}}
\newcolumntype{R}[1]{>{\raggedleft\let\newline \\\arraybackslash\hspace{0pt}}m{#1}}
\begin{document}

\copyrightyear{2019} 
\acmYear{2019} 
\setcopyright{acmcopyright}
\acmConference[KDD '19]{The 25th ACM SIGKDD Conference on Knowledge Discovery and Data Mining}{August 4--8, 2019}{Anchorage, AK, USA}
\acmBooktitle{The 25th ACM SIGKDD Conference on Knowledge Discovery and Data Mining (KDD '19), August 4--8, 2019, Anchorage, AK, USA}
\acmPrice{15.00}
\acmDOI{10.1145/3292500.3330679}
\acmISBN{978-1-4503-6201-6/19/08}

\settopmatter{printacmref=true}
\fancyhead{}

\title[AutoCross]{AutoCross: Automatic Feature Crossing for Tabular Data in Real-World Applications}

%\begin{verbatim}
%\title[AutoCross]{AutoCross: Automatic feature crossing for Tabular Data in Real-World Businesses}
%\end{verbatim}

%\author{ \normalsize{\textbf{Yuanfei Luo$^1$, Mengshuo Wang$^1$, Hao Zhou$^1$, Quanming Yao$^1$,}} \\
%	\normalsize{ \textbf{Wei-Wei Tu$^1$, Yuqiang Chen$^1$, Qiang Yang$^{1,2}$, Wenyuan Dai$^1$}} \\ 
%} 
%\affiliation{
%\institution{
%{\small $^1$4Paradigm Inc., Beijing, China.}\\
%{\small $^2$Hong Kong University of Science and Technology, Hong Kong.}
%}}

\author{Yuanfei Luo}
\email{luoyuanfei@4paradigm.com}
\affiliation{
	\institution{4Paradigm Inc., Beijing, China}
%	\city{Beijing, China}
}
\author{Mengshuo Wang}
\email{wangmengshuo@4paradigm.com}
\affiliation{
	\institution{4Paradigm Inc., Beijing, China}
%	\city{Beijing, China}
}
\author{Hao Zhou}
\email{zhouhao@4paradigm.com}
\affiliation{
	\institution{4Paradigm Inc., Beijing, China}
%	\city{Beijing, China}
}
\author{Quanming Yao$^+$}
\email{yaoquanming@4paradigm.com}
\affiliation{
	\institution{4Paradigm Inc., Beijing, China}
%	\city{Beijing, China}
}
\author{Wei-Wei Tu$^*$}
\email{tuweiwei@4paradigm.com}
\affiliation{
	\institution{4Paradigm Inc., Beijing, China}
%	\city{Beijing, China}
}
\author{Yuqiang Chen}
\email{chenyuqiang@4paradigm.com}
\affiliation{
	\institution{4Paradigm Inc., Beijing, China}
%	\city{Beijing, China}
}
\author{Qiang Yang}
\email{qyang@cse.ust.hk}
\affiliation{
	\institution{Hong Kong University of Science and Technology, Hong Kong}
%	\city{Hong Kong}
}
\author{Wenyuan Dai}
\email{daiwenyuan@4paradigm.com}
\affiliation{
	\institution{4Paradigm Inc., Beijing, China}
%	\city{Beijing, China}
}

% \author{Yuanfei Luo}
% \affiliation{%
%   \institution{4Paradigm Inc.}
%   \city{Beijing}
%   \country{China}
% }

% \author{Mengshuo Wang}
% \affiliation{
%   \institution{4Paradigm Inc.}
%   \city{Beijing}
%   \country{China}
% }

% \author{Quanming Yao}
% \affiliation{
%   \institution{4Paradigm Inc.}
%   \city{Beijing}
%   \country{China}
% }

% \author{Wei-Wei Tu}
% \affiliation{
%   \institution{4Paradigm Inc.}
%   \city{Beijing}
%   \country{China}
% }

% \author{Yuqiang Chen}
% \affiliation{
%   \institution{4Paradigm Inc.}
%   \city{Beijing}
%   \country{China}
% }

% \author{Qiang Yang}
% \affiliation{
%   \institution{Hong Kong University of Science and Technology}
%   \city{Hong Kong}
%   \country{China}
% }

% \author{Wenyuan Dai}
% \affiliation{
%   \institution{4Paradigm Inc.}
%   \city{Beijing}
%   \country{China}
% }

% The default list of authors is too long for headers.
% \renewcommand{\shortauthors}{Y. Luo et al.}

% \settopmatter{printacmref=false} % Removes citation information below abstract
% \renewcommand\footnotetextcopyrightpermission[1]{} % removes footnote with conference information in first column

\begin{abstract}

Feature crossing captures interactions among categorical features and is useful to enhance learning from tabular data in real-world businesses. In this paper, we present AutoCross, an automatic feature crossing tool provided by 4Paradigm to its customers, ranging from banks, hospitals, to Internet corporations. By performing beam search in a tree-structured space, AutoCross enables efficient generation of high-order cross features, which is not yet visited by existing works. Additionally, we propose successive mini-batch gradient descent and multi-granularity discretization to further improve efficiency and effectiveness, while ensuring simplicity so that no machine learning expertise or tedious hyper-parameter tuning is required. Furthermore, the algorithms are designed to reduce the computational, transmitting, and storage costs involved in distributed computing. Experimental results on both benchmark and real-world business datasets demonstrate the effectiveness and efficiency of AutoCross. It is shown that AutoCross can significantly enhance the performance of both linear and deep models.
\footnote{ Y. Luo and M. Wang make equal contribution to this work; 
	Q. Yao and W.-W. Tu are the corresponding authors.}

\end{abstract}

%
% The code below should be generated by the tool at
% http://dl.acm.org/ccs.cfm
% Please copy and paste the code instead of the example below.
%
\begin{CCSXML}
	<ccs2012>
	<concept>
	<concept_id>10010147.10010257</concept_id>
	<concept_desc>Computing methodologies~Machine learning</concept_desc>
	<concept_significance>500</concept_significance>
	</concept>
	</ccs2012>
\end{CCSXML}

\ccsdesc[500]{Computing methodologies~Machine learning}

\keywords{AutoML, Feature Crossing, Tabular Data}

\maketitle

\bibliographystyle{ACM-Reference-Format}
% \bibliography{sample-bibliography}

\section{Introduction}
\label{sec:intro}

%\footnote{*** title}
Recent years have seen the emergence of the automated machine learning (AutoML) \cite{yao2018taking,zhang2019}
as a promising way to make machine learning techniques easier to use, so that  
the manpower heavily involved in the current applications could be spared,
and
greater social and commercial benefits could be made.
In particular, AutoML aims to automate some parts or the whole of the machine learning pipeline, which usually comprises data preprocessing, feature engineering, model selection, hyper-parameter tuning, and model training.
%Features describe properties and characteristics of instances, such as the occupation of an individual or the price of a product.
%They represent the data, determine how informative and discriminative it is, and greatly impact the quality of models learned on it~\cite{tom1997machine,domingos2012few}.
%It has been well recognized that informative and discriminative features  can considerably improve the predicting performance of machine learning methods~\cite{tom1997machine,domingos2012few}.
%automatic feature engineering is one of the major topics of AutoML research and application.
%In order to improve the 
%In order to enhance the data and improve the learning performance, feature generation is widely applied in machine learning applications, where useful features are constructed based on original ones.
%However, traditional feature generation is labor- and knowledge- intensive, and needs to be carried out by experienced human experts.
%This motivates automatic feature generation~\cite{rosales2012post,cheng2014gradient,chapelle2015simple,juan2016field,katz2016explorekit,blondel2016higher,qu2016product,zhang2016deep,lian2018xdeepfm}, one major topic of AutoML.
%Its main task is to represent the data with more informative and discriminative features, and by this means improve the performance of subsequent learning.
It has been well recognized that the performance of machine learning methods depends greatly on the quality of features~\cite{tom1997machine,liu1998feature,domingos2012few}.
Since raw features rarely lead to satisfying results,  
%feature generation is widely to enhance the data and improve learning performance. 
%However, it is often a tedious and task-specific work.
manual feature generation is often carried out to better represent the data and improve the learning performance.
However, it is often a tedious and task-specific work.
This motivates automatic feature generation~\cite{rosales2012post,cheng2014gradient,chapelle2015simple,juan2016field,katz2016explorekit,blondel2016higher,qu2016product,zhang2016deep,lian2018xdeepfm}, one major topic of AutoML, where informative and discriminative features are automatically generated.
In 4Paradigm, a company with the aspiration to make machine learning techniques accessible to more people,
we also make efforts on this topic. 
%We provide our customers with 
%simple yet powerful automatic feature generation tools, and enable them to better exploit their data without the need of machine learning expertise.
%\footnote{*** cite 'a few u
%Weseful things'}
We provide simple yet powerful feature generation tools to organizations and companies without machine learning expertise, and enable them to better exploit their data.

%\paragraph{real business, tabular data, categorical feature}
%\footnote{*** need to emphasize the popularity of tabular data}
%\footnote{*** maybe a figure of tabular data}
The customers of 4Paradigm range from banks, hospitals, to various Internet Corporations.
In their real-world businesses, e.g., 
fraud detection~\cite{bolton2002statistical,wang2010comprehensive},
medical treatment~\cite{kononenko2001machine}, 
online advertising~\cite{zeff1999advertising,evans2009online} or recommendation~\cite{bobadilla2013recommender},
etc.,
the majority of gathered data is presented in the form of tables, 
where each row corresponds to an instance and each column to a distinct feature.
Such data is often termed as \textit{tabular data}.
%Such data, often termed as \textit{tabular data}, is also the greatest interest of our customers, ranging from banks, hospitals, to Internet corporations. Figure~\ref{fig:tabular} gives an example of tabular data.
%\footnote{*** dominating before or after encoding???}
%\footnote{*** we can make it brief if we assume categorical feature is a common sense}
%Furthermore, in such data, a considerable part of features are \textit{categorical}, i.e., they take one or multiple nominal values that cannot be easily converted to meaningful numbers.
%For instance, in online recommendation systems, commonly seen categorical features are: 
%\code{user\_id}, \code{item\_id},
%\code{item\_category},
%and \code{advertiser}, 
%just to name a few.
Furthermore, in such data, a considerable part of features are \textit{categorical}, e.g., `\code{job}' and
`\code{education}'
to describe the occupation and education status of an individual, respectively.
These features, indicating an entity or describing some important attributes, are very important and informative.
In order to make better use of them,
many feature generation methods have been proposed recently~\cite{rosales2012post,chapelle2015simple,cheng2016wide,juan2016field,qu2016product,zhang2016deep,guo2017deepfm,lian2018xdeepfm}.
% to extract more information from them.
In this paper, we also aim to increase learning performance by exploiting categorical features.
%Also, categorical features will introduce high-dimensionality and extreme sparsity to the data after encoding, e.g., with one-hot encoding or hashing trick~\cite{weinberger2009feature}.

%\paragraph{ tiontion (cross-product), not automated}
%\footnote{*** the lengths of these paragraphs are somewhat not suitable, maybe adjust them}
%In order to enhance tabular data, 
%i.e., to make them more representative and improve the performance of subsequent learning,
%we want to generate features that are  more informative and discriminative.
%In this paper, we focus on categorical features.

%\textit{Cross features}, or \textit{cross features} for short, 
%, also known as \textit{cross features} or \textit{multi-way features
%are produced by taking \textit{cross-product} of sparse features~\cite{rosales2012post,chapelle2015simple}, and are also termed as \textit{cross features} or \textit{multi-way features}~\cite{lian2018xdeepfm}.
\textit{Feature crossing}, taking cross-product of sparse features, is a promising way to capture the interaction among categorical features and is widely used to enhance learning from tabular data in real-world businesses
%It captures interactions between categorical features, and
%have been widely used to enhance tabular data
% to capture interaction among categorical features (after encoding) and add nonlinearity to models
~\cite{rosales2012post,chapelle2015simple,cheng2016wide,wang2017deep,lian2018xdeepfm}.
The results of feature crossing are called \textit{cross features}~\cite{wang2017deep,lian2018xdeepfm}, or conjunction features~\cite{rosales2012post,chapelle2015simple}.
%\footnote{*** need more references}
Feature crossing represents the co-occurrence of features, which may be highly correlated with the target label.
%\wms{
For example, the cross feature 
%of \code{user\_id} and \code{item\_category}, denoted as 
`\code{job $\otimes$ company}' indicates that an individual takes a specific job in a specific company, and is a strong feature to predict one's income.
%}
Feature crossing also adds nonlinearity to data, which may improve the performance of learning methods.
%For example,
%Take generalized linear models as an example.
% are widely used in large scale machine learning for their simplicity, scalability and speed~\cite{rosales2012post,chapelle2015simple}.
For instance, the expressive power of linear models is restricted by their linearity, but can be extended by cross features~\cite{rosales2012post,chapelle2015simple}.
Last but not least, explicitly generated cross features are highly interpretable, which is an appealing characteristic in many real-world businesses, such as medical treatment and fraud detection.

\begin{figure}[t]
	\centering
	\includegraphics[width=1\columnwidth]{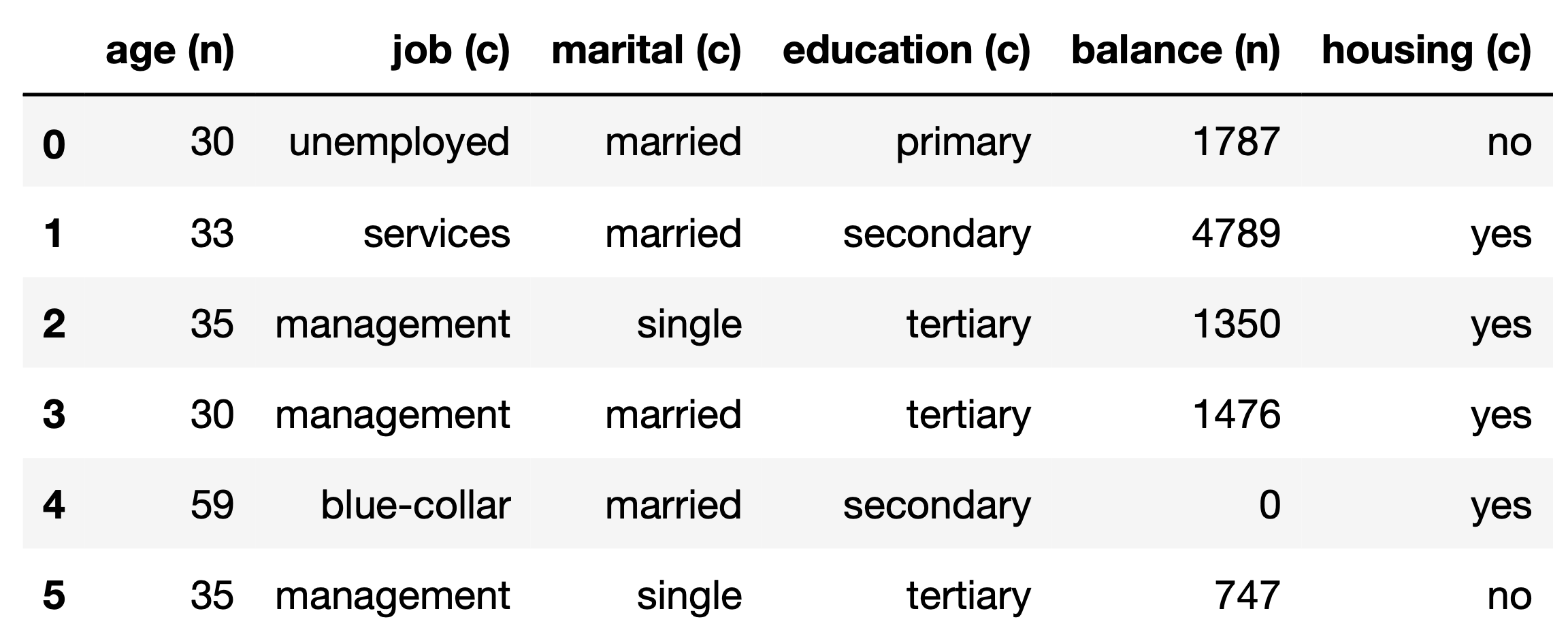}
%	\vspace{-10px}
	\caption{An example of tabular data (UCI-Bank). The letters embraced in the parentheses indicate the feature types, with `n' standing for `numerical'  and `c' for `categorical'.}
	\label{fig:tabular}
	\vspace{-15px}
\end{figure}

%\footnote{*** merge with the last paragraph?}
However, to enumerate all cross features may lead to degraded learning performance, since they may be irrelevant or redundant, introduce noise, and increase the difficulty of learning.
Hence, only useful and important cross features should be generated, but they are often task-specific~\cite{rosales2012post,chapelle2015simple,wang2017deep,lian2018xdeepfm}.
In traditional machine learning applications, human experts are heavily involved in feature engineering, striving to generate useful cross features for every task, with their domain-knowledge in a trial-and-error manner.
Furthermore, even experienced experts may have trouble when the number of original features is large.
%The difficulty is even increased 
%For companies and organizations with experienced machine learning experts, manual feature crossing is a huge cost of manpower resources.
%For those without such resources, the high manpower requirement of traditional feature crossing hinders the full use of their data.
The manpower requirement and difficulty of manual feature crossing greatly increase the total cost to apply machine learning techniques, and even hinder many companies and organizations to make good use of their data.

This raises the great demand for \textit{automatic feature crossing}, the target of our work presented in this paper.
%To be more specific, in this paper we present a feature crossing tool that is highly automatic, easy to use, while powerful.
%However, to fulfill our business requirement
In addition to our main target, i.e., to automate feature crossing with high effectiveness and efficiency, we need to consider several more requirements:
%\footnote{*** we can make fast inference as an appealing characeristic of feature crossing, and spare the discussion here, as well as the off-line / online setting}
%\wms{
%}
%The latency of online prediction is a great issand 
%makes over-complicated models inapplicable;
%2) \textit{automaticity requirement}: 	since no machine learning expertise of our customers is assumed, the feature crossing tool should be highly automated and can be applied as a black-box. The hyper-parameters should be minimized in their number. Default values for those necessary ones need to be properly preset, or adaptive mechanisms are required to automatically adjust their values;
1) \textit{simplicity requirement}: a tool with high simplicity is user-friendly and easy to use.
The performance of most existing automatic feature generation methods greatly depends on some hyper-parameters.
Examples are the thresholds in ExploreKit~\cite{katz2016explorekit}, subsampling ratio in~\cite{chapelle2015simple}, and network architectures in deep-learning-based methods~\cite{ cheng2016wide,qu2016product,zhang2016deep,guo2017deepfm,lian2018xdeepfm}.
These hyper-parameters should be properly determined or carefully tuned by the user for each specific task. 
Since no machine learning expertise of our customers is assumed,
%these hyper-parameters should not be exposed to the users;\\
hyper-parameters that require careful setting or fine-tuning should be avoided.
2) \textit{distributed computing}: the large amount of data and features in real-world businesses makes distributed computing a must. Feature crossing methods should take into consideration the corresponding computational, transmitting and storage costs.
%the numbers of instances and features (after one-hot encoding) can easily surpass 
%\footnote{*** no corresponding content in the contributions}
3) \textit{real-time inference requirement}: real-time inference is involved in many real-world businesses. 
%It is the stage where instances are successively inputted, instantly after which should the feature be produced and the predictions made by the model.
In such cases, once an instance is inputted, the feature should be produced instantly and the prediction made.
% (prediction) happens much more frequently than off-line feature engineering and model training. 
Latency requirement for real-time inference is rigorous~\cite{guo2002survey,crankshaw2017clipper}, which raises the need for fast feature producing.

To summarize our business requirements, we need our automatic feature crossing tool to have high \textit{effectiveness}, \textit{efficiency} and \textit{simplicity}, be optimized for \textit{distributed computing}, and enable \textit{fast inference}.
To address these challenges, we present AutoCross, an automatic feature crossing tool especially designed for tabular data with considerable categorical features.
The major contributions of this paper are summarized as follows:

\begin{itemize}
	\item
%	 \textbf{High-order explicit feature crossing:}
%	with full consideration of our business scenarios, we proposed an efficient high-order feature crossing method to improve the predicting power of subsequent learning algorithms.
	We propose an efficient AutoML algorithm to explicitly search for useful cross features in an extensive search space. It enables to construct \textit{high-order} cross features, which can further improve the learning performance but is not yet visited by existing works.
	
%	\item
%	The cross features are generated explicitly with high interpretability and the learned feature generation routine is encoded in a reusable feature producer, which takes little online computational, storage, and power resources.
	
	\item 
%	\footnote{*** need to make clear in the following, which parts/algorithms are proposed for approachability}
	AutoCross features high simplicity with minimized exposure of hyper-parameters.
	We propose successive mini-batch gradient descent and multi-granularity discretization. They improve the efficiency and effectiveness of feature crossing while avoiding careful hyper-parameter setting.
%	\footnote{*** i'm working on the ideas}
%	\wms{hyper-parameters are dynamically determined during the feature generation procedure.}
	
	\item 
%	\textbf{Designed for distributed computing and highly automated:}
	AutoCross is fully optimized for distributed computing.
	By design, 
	the algorithms can reduce the computational, transmitting, and storage costs.

	\item Extensive experimental results on both benchmark and real-world business datasets are reported to verify the effectiveness and efficiency of AutoCross. It can significantly improve the performance of generalized linear models, while keeping a low inference latency.
	It is also shown that AutoCross can accompany deep models, by which means we can combine the advantage of explicit and implicit feature generation and further improve the learning performance.
%	\wms{It is also shown that the generated features can be used to further improve the performance of deep models such as Wide \& Deep~\cite{cheng2016wide}.}

\end{itemize}
%\footnote{*** mention can be combined with deep?}
With the these characteristics, AutoCross enables our customers to make better use of their data in real-world businesses with little learning and using costs.

%In upcoming sections, we will discuss in detail the system design, employed methods/techniques in each component, and implementation issues in the presented work.
%Extensive experimental results on both open-source and real-business datasets are reported to verify its effectiveness and efficiency.

The remaining of this paper is organized as follows: in Section~\ref{sec:motivation}, we demonstrate what motivates us to develop our own feature crossing tool; next, in Section~\ref{sec:overview}, the overall system is introduced; the techniques employed in each component, as well as the choices made in designing the system, are detailed in Section~\ref{sec:method}; experimental results are presented in Section~\ref{sec:exp}; Section~\ref{sec:related} reviews some related works and Section~\ref{sec:con} concludes this paper.

\section{Motivation}
\label{sec:motivation}

\begin{table*}[t]
	\centering
	\caption{Comparison between AutoCross and other feature generation methods for tabular data.}
	\label{tab:compare}
	\vspace{-5px}
	\scalebox{0.8}{
		\begin{tabular}{ c | C{80px} C{80px} C{80px} C{80px}}
			\hline\hline
			%			\multicolumn{7}{c}{Benchmark Datasets}                                       &  \\
			%			\hline
			%			\multirow{2}{*}{Name}                                          &             \multicolumn{2}{c|}{\# Samples}             &            \multicolumn{3}{c|}{\# Features}            & \multirow{2}{*}{Domain} \\ \cline{2-6}
			\textbf{Method}                                                                  & \textbf{High-order Feature Cross} & \textbf{Simplicity} & \textbf{Fast Inference} & \textbf{Interpretability}   \\ \hline
			Search-based methods (e.g., \cite{rosales2012post,chapelle2015simple})           & $\times$                          & medium              & $\surd$                 & $\surd$                     \\ \hline
			Implicit deep-learning-based methods (e.g., \cite{zhang2016deep,qu2016product})  & $\times$                           & low                 & $\times$                & $\times$                    \\ \hline
			Explicit deep-learning-based methods (e.g., \cite{wang2017deep,lian2018xdeepfm}) & $\times$                           & low                 & $\times$                & $\surd$                    \\ \hline\hline
			\textbf{AutoCross}                                                               & $\surd$                           & high                & $\surd$                 & $\surd$                    \\ \hline\hline
		\end{tabular}
	}
\end{table*}

%In the introduction, we propose to explicitly construct cross features from tabular data with categorical features, which can not only enhance the data but also ensure online performance and interpretability.
%\wms{
Cross features are constructed by vectorizing the cross-product ($\otimes$) of original features: 
\begin{equation}
\label{eq:conj}
\vect{c}_{i,j,\cdots,k} = \code{vec}\left(\vect{f}_i \otimes \vect{f}_j \otimes \cdots\otimes \vect{f}_k\right),
\end{equation}
%}
where $\vect{f}_i$'s are binary feature vectors (generated by one-hot encoding or hashing trick) and $\code{vec}(\cdot)$ vectorizes a tensor. A cross feature is also a binary feature vector. If a cross feature uses three or more original features, we denote it as a \textit{high-order} cross feature.
In this section, we state motivation of our work: why we consider high-order cross features and why existing works do not suit our purpose.

While most early works of automatic feature generation focus on second-order interactions of original features~\cite{rosales2012post,cheng2014gradient,chapelle2015simple,juan2016field,katz2016explorekit},
trends have appeared to consider higher-order (i.e., with order higher than two) interactions to make data more informative and discriminative~\cite{blondel2016higher,qu2016product,zhang2016deep,lian2018xdeepfm}.
High-order cross features, just like other high-order interactions, can further improve the quality of data and increase predictive power of learning algorithms.
For example, a third-order cross feature `\code{item $\otimes$ time $\otimes$ region}' can be a strong feature
to recommend regionally preferred food during certain festivals.
%may indicate some regional consumption habit during certain festivals.
%, which cannot be represented by any second-order ones.
%\footnote{*** can also be used in Wide \& Deep paradigms}
However, explicit generation of high-order cross features is not yet visited in existing works.
In the remaining of this section, we demonstrate that existing feature generation approaches either do not generate high-order cross features or cannot fulfill our business requirements.
% which motivates our work presented in this paper.

%Additionally, these features are high interpretable if they are generated explicitly, which is a very appealing characteristic to our customers.
%On the other hand, the interpretability of machine learning is more emphasized in the recent years than ever. In some of our typical business scenarios (e.g., banking and medical treatment) interpretability of features is even more important.
%Cross features, if generated explicitly, are highly interpretable and hence appealing to our customers.

%As many automatic feature generation approaches as been proposed, in this section, we will argue that they do not suit our business scenarios, which motivate us to develop our own tool.

On the one hand, \textit{search-based feature generation methods} employ explicit search strategies to construct useful features or feature sets. 
Many such methods focus on numerical features~\cite{smith2005genetic,fan2010generalized,kanter2015deep,katz2016explorekit,tran2016genetic}, and do not generate cross features.
%and are not suitable for tabular data with considerable categorical features.
As for existing feature crossing methods~\cite{rosales2012post,chapelle2015simple}, they are not designed, and are hence inefficient, to perform high-order feature crossing.
% and will become inefficient if simply extended to generate such features.
%The rigorous efficiency requirement of high-order feature crossing also hinders easy extension of these methods.

%\footnote{*** move to Related Works}
%On the one hand, factorization machines seek low-dimensional embeddings of original features, and captures the interaction among them~\cite{cheng2014gradient,juan2016field,blondel2016higher}.
%\footnote{*** check this}
%Such interactions, however, are not explicitly constructed. Furthermore, they may over-generalize~\cite{cheng2016wide} and/or introduce noise since they enumerate all possible interactions regardless of their usefulness~\cite{lian2018xdeepfm}.

On the other hand, there are \textit{deep-learning-based feature generation approaches}, where interactions among features are implicitly or explicitly represented by specific networks.
% \footnote{*** deep learning, FNN, PNN} 
Variants of deep learning models are proposed to deal with categorical features (e.g., Factorisation-machine supported neural networks~\cite{zhang2016deep} and Product-based neural networks ~\cite{qu2016product}).
Efforts are also made to accompany deep learning models with additional structures that incorporate: 1) manually designed features (e.g., Wide \& Deep~\cite{cheng2016wide}); 2) factorization machines (e.g., DeepFM~\cite{guo2017deepfm} and xDeepFM~\cite{lian2018xdeepfm}), and/or 3) other feature construction components (e.g., Deep \& Cross Network~\cite{wang2017deep} and xDeepFM~\cite{lian2018xdeepfm}).
Especially, xDeepFM~\cite{lian2018xdeepfm}, proven superior to other deep-learning-based approaches mentioned above, proposed a compressed interaction network~(CIN) to explicitly capture high-order interactions among embedded features. This is done by performing entry-wise product on them:
\begin{equation}
\label{eq:cin}
\vect{e}_{i,j,\cdots,k} = \vect{W}_i \vect{f}_i \circ \vect{W}_j \vect{f}_j \circ \cdots \circ \vect{W}_k \vect{f}_k,
\end{equation}
where $\circ$ denotes the entry-wise product (Hadamard product) and $\vect{W}_i$'s the embedding matrices so that $\vect{Wf}\in \mathbb{R}^D$. Different embedding matrices lead to different interaction $\vect{e}$'s. 
%For the ease of discussion, we name such interactions as CIN interactions.
%\footnote{*** only a special case in uni-value case}
%\footnote{*** cannot deal with multi-value}
However, as stated in the following proposition, the resulting features in Equation~\eqref{eq:cin} is only a special case of embedded high-order cross features.
\begin{prop}
	\label{prop:cin}
%	For arbitrary binary features $\vect{x}$, $\vect{y}$ and their cross feature $\vect{z}$, there exist infinitely many embeddings of $\vect{z}$, namely $\vect{C}\vect{z}\in\mathbb{R}^D$, so that:
%	there do not exist embedding matrices $\vect{A}$ and $\vect{B}$ that can satisfy the following equation:
	There exist infinitely many embedding matrices $\vect{C}$'s with $D$ rows so that: there do not exist any embedding matrices $\vect{A}$ and $\vect{B}$ that satisfy the following equation:
	\begin{equation}
		\vect{Ax}\circ\vect{By} = \vect{Cz},
	\end{equation}
	for all binary vectors $\vect{x}$, $\vect{y}$ and their crossing $\vect{z}$.
%	The set of all possible interactions $\{\vect{e}\}$ (Equation~\eqref{eq:cin}) constructed in xDeepFM~\cite{lian2018xdeepfm}  do not contain the set of all possible embeddings of cross features $\{\vect{W}\vect{c}\}$ (Equation~\eqref{eq:conj}), where $\vect{W}$ is an arbitrary embedding matrix that maps $\vect{c}$ to the same space of $\vect{e}$.
\end{prop}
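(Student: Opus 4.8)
The plan is to show that the CIN parametrization in Equation~\eqref{eq:cin} secretly imposes a rank-one constraint on each output coordinate, whereas a generic embedding of the crossed feature is subject to no such constraint; the gap between the two then supplies a continuum of witnesses $\vect{C}$.

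First I would reduce the functional equation to a statement about the columns of the three matrices, using that $\vect{x}$ and $\vect{y}$ are one-hot encodings. Testing the identity on standard basis vectors, $\vect{x}=\vect{e}_i$ and $\vect{y}=\vect{e}_j$ selects the $i$-th column $\vect{a}_i$ of $\vect{A}$ and the $j$-th column $\vect{b}_j$ of $\vect{B}$, while the corresponding crossing $\vect{z}$ is the one-hot vector indexing the pair $(i,j)$, so that $\vect{Cz}$ is the associated column of $\vect{C}$, which I denote $\vect{c}_{ij}$. In particular, any valid $\vect{A},\vect{B}$ must satisfy
\begin{equation}
\vect{a}_i\circ\vect{b}_j=\vect{c}_{ij}\qquad\text{for all } i,j.
\end{equation}

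Next I would read this coordinate by coordinate, which is the key reformulation. Fixing a row index $d\in\{1,\dots,D\}$ and writing $M^{(d)}$ for the matrix whose $(i,j)$ entry is the $d$-th entry of $\vect{c}_{ij}$, the constraint becomes $M^{(d)}_{ij}=(\vect{a}_i)_d\,(\vect{b}_j)_d$; that is, $M^{(d)}$ is the outer product of the $d$-th rows of $\vect{A}$ and $\vect{B}$, hence has rank at most one. Thus representability of $\vect{C}$ in the CIN form requires every one of its $D$ coordinate slices $M^{(d)}$ to be rank one. Recognizing that the Hadamard product decouples across coordinates into this rank-one condition is the main (if modest) obstacle; once it is in hand, the remainder is immediate, since generic slices have rank exceeding one.

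Finally I would exhibit the witnesses by forcing a single slice to violate the rank bound. For instance, with the crossed feature built from two binary features of length two, let the first-coordinate slice be $t\,\vect{I}_2$, the $2\times2$ identity scaled by $t$, and fill the remaining coordinates arbitrarily. Since $t\,\vect{I}_2$ has rank two whenever $t\neq0$, no $\vect{A},\vect{B}$ can reproduce such a $\vect{C}$ (and the construction embeds into larger $m,n,D$ by padding). Letting $t$ range over the nonzero reals yields infinitely many distinct, non-representable $\vect{C}$, which establishes the proposition.
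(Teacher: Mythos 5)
Your proof is correct and its skeleton matches the paper's: both test the identity on one-hot $\vect{x}$, $\vect{y}$ to reduce it to the column-wise constraint $\vect{a}_i\circ\vect{b}_j=\vect{c}_{ij}$, and both then observe that a generic $\vect{C}$ violates the multiplicative separability this forces. Where you diverge is in how that violation is certified. The paper fixes four index pairs $(i,k),(i,l),(j,k),(j,l)$ and derives a contradiction from the entry-wise ratio condition $\vect{c}_{ik}/\vect{c}_{jk}\neq\vect{c}_{il}/\vect{c}_{jl}$, which is exactly the statement that some $2\times 2$ minor of one of your coordinate slices is nonzero; the two obstructions are mathematically the same. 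Your rank-one reformulation buys two things, though. First, rigor: the paper's entry-wise division is only meaningful when the denominators are nonzero, a caveat it does not address, whereas the condition \emph{rank of $M^{(d)}$ exceeds one} is stated without dividing by anything. Second, explicitness: the paper merely asserts that one "can easily" choose $\vect{C}$ to satisfy the ratio inequality and that infinitely many such $\vect{C}$ exist, while your family with first slice $t\,\vect{I}_2$, $t\neq 0$, is a concrete continuum of witnesses (indeed your $t\,\vect{I}_2$ has zero off-diagonal entries, so it falls outside the regime where the paper's division argument literally applies, yet your rank argument handles it without comment). The only point worth tightening is the throwaway clause about "filling the remaining coordinates arbitrarily" and "padding" to larger dimensions: you should note that distinctness of the resulting $\vect{C}$'s as $t$ varies is immediate because they differ in the first slice, so the infinitude claim survives the padding.
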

The proof can be found in Appendix~\ref{app:cin}.
Furthermore, deep models are relatively slow in inference 
%and take large storage and power resources
, which makes model compression~\cite{han2015deep} or other accelerating techniques necessary to deploy them in many real-time inference systems.

Motivated by the usefulness of high-order cross features and the limitation of existing works, in this paper, we aim to propose a new automatic feature crossing method that is efficient enough to generate high-order cross features, while satisfying our business requirements, i.e., to be simple, optimized for distributed computing, and enable fast inference.
%There are other feature generation methods that are not very related to our work. We will review them 
%In Section~\ref{sec:related}, we review other related works and demonstrate why they are not suitable for our task.
Table~\ref{tab:compare} compares AutoCross and other existing methods.

%\footnote{*** maybe we need to discuss why embedded methods, group lass for instance, won't work. computational intractable with too many signs? storage and transmitting costs}
%\footnote{*** move to Related Works}
%It should also be noted that, there are some embedded feature selection/generation methods, such as group lasso~\cite{meier2008group}, random forest~\cite{liaw2002classification}, and gradient boost machine~\cite{friedman2001greedy}, that intrinsically identify or implicitly construct useful features along model training.
%However, these methods often struggle to deal with large scale problems with high-dimensional sparse data generated from categorical features after encoding, and/or have computational issues when the number of features is large, which happens when high-order cross features are considered.
%Finally, we would like to point out that most existing approaches are not yet available as industrial products. In contrast, the presented tool has already empowered hundreds of our customers with machine learning techniques.

\section{System Overview}
\label{sec:overview}

Figure~\ref{fig:system} gives an overview of AutoCross, comprising three parts: 1) the general work flow; 2) the component algorithms; and 3) the underlying infrastructure.

\begin{figure}[ht]
	\centering
	\includegraphics[width=1.0\columnwidth]{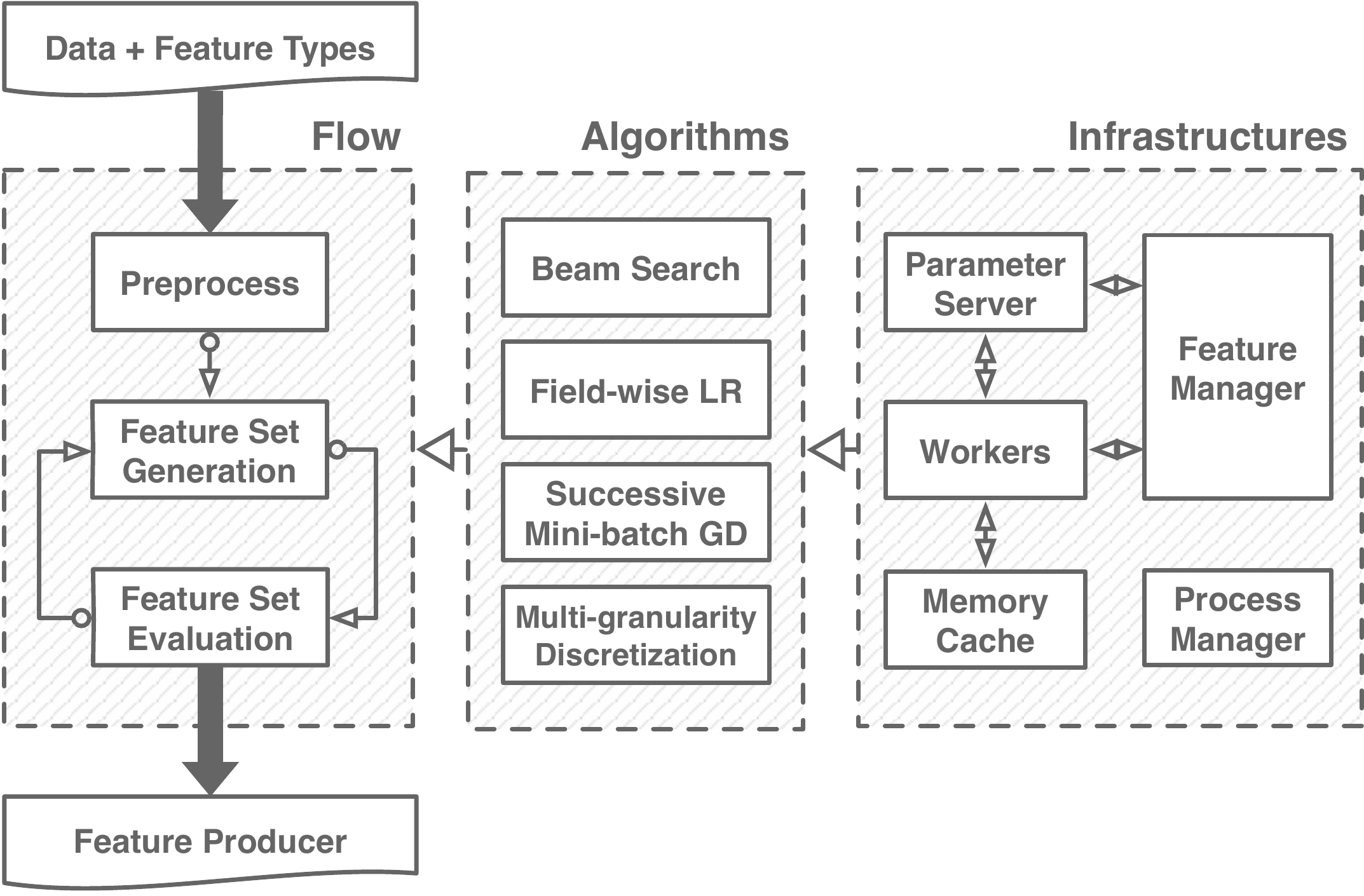}
	\vspace{-10px}
	\caption{System overview of AutoCross.}
	\label{fig:system}
%	\vspace{-15px}
\end{figure}

From the users' perspective, AutoCross is a black box that takes as input the training data and feature types (i.e., categorical, numerical, time series, etc.), and outputs a feature producer.
%AutoCross outputs a feature producer that 
%The feature producer can fast perform feature crossing in both model training and inference stages.
The feature producer can fast perform crossing learned by AutoCross to transform the input data,
which is then used by the learning algorithm in model training, or the learned model in inference.
It employs hashing trick~\cite{weinberger2009feature} to improve the accelerate feature producing.
Compared with deep-learning-based methods, the feature producer takes significantly less computation resources, and is hence especially suitable for real-time inference.

Inside the black box (`flow' part in Figure~\ref{fig:system}), the data will first be preprocessed, where
hyper-parameters are determined, missing values filled and numerical features discretized.
%Discretization~\cite{,liu2002discretization,kotsiantis2006discretization,chapelle2015simple} is a promising way to improve the predicting capability of numerical features and enable them to interact with categorical features.
Afterwards, useful feature sets are iteratively constructed in a loop consisting of two steps: 1) \textit{feature set generation}, where candidate feature sets with new cross features are generated; and 2) \textit{feature set evaluation}, where candidate feature sets are evaluated and the best is selected as a new solution.
This iterative procedure is terminated once some conditions are met.
%Then it outputs a feature producer.
% that can generate features in the way learned by AutoCross.
%\footnote{*** maybe discuss the feature producer here}
%The producer, as will be further used in off-line model training and online inference, takes raw data as input, generates useful cross features, and outputs the enhanced data.

From the implementation perspective (`infrastructures' part in Figure~\ref{fig:system}), the foundation of AutoCross is a distributed computing environment based on the well-known parameter server~(PS) architecture~\cite{li2013parameter}.
%\wms{
%	Candidates to be evaluated are distributed to different workers.
%}
Data is cached in memory by blocks, where each block contains a small subset of the training data.
%Different candidate features are allocated to different workers. 
Workers visit the cached data blocks, generate corresponding features, and evaluate them.
%During feature evaluation, workers interacts
% (e.g., sending gradient calculated on mini-batches, receiving updated model parameters) 
% with the parameter server.
%Information of the features are stored in the parameter server, and it interacts (e.g., sending updated model parameters, receive gradient calculated on mini-batches) with workers during model training. Data are cached in the memory and visited by the workers. All three parts can be distributed.
A feature manager takes control over the feature set generation and evaluation. 
A process manager controls the whole procedure of feature crossing, including hyper-parameter adaptation, data preprocessing, work flow control, and program termination.
%The major costs of this architecture are the computation costs of workers, storage costs of PS and memory cache, and transmitting costs between PS, cache and workers.

The algorithms, that bridge the work flow and infrastructures, are the main focus of this paper (`algorithms' part of Figure~\ref{fig:system}). 
Each algorithm corresponds to a part in the work flow: 
we employ beam search for feature set generation to explore an extensive search space (Section~\ref{sec:gen}), field-wise logistic regression and successive mini-batch gradient descent for feature set evaluation (Section~\ref{sec:eval}), and multi-granularity discretization for data preprocessing (Section~\ref{sec:pre}).
These algorithms are chosen, designed, and optimized with the considerations of simplicity and costs of distributed computing, as will be detailed in the next section.

\section{Method}
\label{sec:method}

In this section, we detail the algorithms used in AutoCross.
%First, we introduce the notations used in this paper and formalize the feature crossing problem.
%\wms{
%	\footnote{*** need to determine where to put this sentence}
	We focus on the binary classification problem. It is not only the subject of most existing works~\cite{rosales2012post,chapelle2015simple,cheng2016wide,katz2016explorekit,lian2018xdeepfm}, but also the most widely considered problem in real-world businesses~\cite{zeff1999advertising,kononenko2001machine,bolton2002statistical,evans2009online,wang2010comprehensive,bobadilla2013recommender}.
%}

\subsection{Problem Definition}

% \footnote{\wms{*** do we need to introduce cross feature in detail???}}
For the ease of discussion, first we assume that all the original features are categorical.
The data is represented in the \textit{multi-field} categorical form~\cite{zhang2016deep,wang2017deep,lian2018xdeepfm}, where each \textit{field} is a binary vector generated from a categorical feature by encoding (one-hot encoding or hashing trick).
Given training data $\mathcal{D}_{TR}$, we split it into a sub-training set $\mathcal{D}_{tr}$ and a validation set $\mathcal{D}_{vld}$.
Then, we represent $\mathcal{D}_{tr}$ with a feature set $\mathcal{S}$, and with learning algorithm $\mathcal{L}$ learn a model $\mathcal{L}(\mathcal{D}_{tr}, \mathcal{S})$.
To evaluate this model, we represent the validation set $\mathcal{D}_{vld}$ with the same feature set $\mathcal{S}$ and calculate a metric 
$\mathcal{E}\left(\mathcal{L}(\mathcal{D}_{tr}, \mathcal{S}), \mathcal{D}_{vld}, \mathcal{S}\right)$,
which should be maximized.

Now, we formally define the \textit{feature crossing problem} as:
\begin{equation}
\label{eq:def}
	\max_{\mathcal{S}\subseteq A(\mathcal{F)}} \mathcal{E}\left(\mathcal{L}(\mathcal{D}_{tr}, \mathcal{S}), \mathcal{D}_{vld}, \mathcal{S}\right),
\end{equation}
where $\mathcal{F}$ is the original feature set of $\mathcal{D}_{TR}$, and $A(\mathcal{F)}$ is the set of all original features and possible cross features generated from $\mathcal{F}$.

\begin{figure}[t]
	\centering
	\includegraphics[width=0.7\columnwidth]{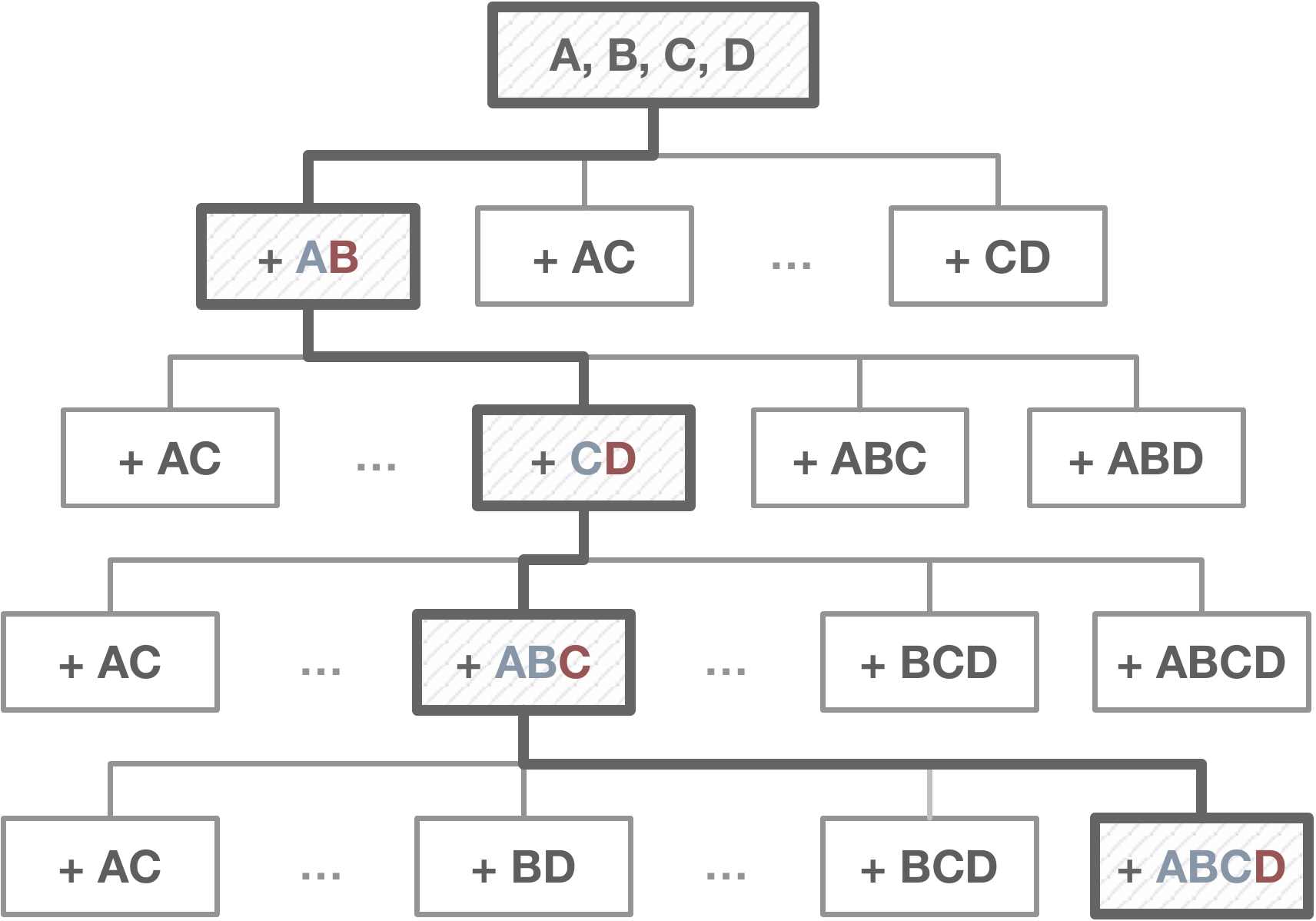}
%	\vspace{-5px}
	\caption{An illustration of the search space and beam search strategy employed in AutoCross. 
%		The original feature set is $\{A,B,C,D\}$ and considered as the root node. Each other node in the tree corresponds to a feature set that is constructed by adding a generated feature to its father node.
		%	indicates an action to add a generated feature to its father node, and corresponds to a new feature set. 
		%		For simplicity, we denote the crossing of two features \code{A} and \code{B} as \code{AB}, and higher-order cross features in similar ways.
		In beam search, only the best node (bold stroke) at each level is expanded. We use two colors to indicate the two features that are used to construct the new cross feature.}
%	\vspace{-15px}
	\label{fig:beam}
\end{figure}

\subsection{Feature Set Generation}
\label{sec:gen}

In this subsection, we introduce the feature set generation method in AutoCross, which also determines the main search strategy.

%\paragraph{extensive search space}
We consider the feature crossing problem (Problem~\eqref{eq:def}).
Assume the size of the original feature set is $d$, which is also the highest order of cross features.
The size of $A(\mathcal{F})$ is:
\begin{equation}
	\mathrm{card}\left(A(\mathcal{F})\right) =  \sum_{k=1}^d C(d,k) =  2^d -1,
\end{equation}
and the number of all possible feature sets is $2^{(2^d-1)}$, a double exponential function of $d$.
%\paragraph{greedy as a solution}
Obviously, it is impractical to search for an globally optimal feature set in such an extensive space.
%which seemingly leaves greedy algorithm the only feasible choice.
In order to find a moderate solution with limited time and computational resources, 
we employ a greedy approach to \textit{iteratively construct a locally optimal feature set}.
%, just like the approaches proposed in many other explicit feature generation methods~\cite{chapelle2015simple,kanter2015deep,katz2016explorekit}.

%\paragraph{reduced (greedy) search space, tree structure, root node, beam search, expanding method, visit order}
%In AutoCross, we formulate a reduced search space with a tree structure, illustrated in Figure~\ref{fig:beam}.
%Each node corresponds to a distinct feature set.
In AutoCross, we consider a tree-structured space $\mathcal{T}$ depicted in Figure~\ref{fig:beam}, where each node corresponds to a feature set and the root is the original feature set $\mathcal{F}$.
\footnote{
	In Figure~\ref{fig:beam} only one node at each level is expanded. This is because we use beam search strategy. It should be noted that the search space $\mathcal{T}$ is a fully expanded tree.}
For simplicity, in this example, we denote the crossing of two features \code{A} and \code{B} as \code{AB}, and higher-order cross features in similar ways.
%\footnote{*** consider, if we need to discuss this decision. `that is to say, we do not consider any feature sets where any original feature is absent.' weak + strong $\rightarrow$ strong???}
For a node (a feature set), its each child is constructed by adding to itself one \textit{pair-wise} crossing of its own elements.
%, as illustrated in Figure~\ref{fig:gen}.
%\wms{
The pair-wise interactions between cross features (or a cross feature and an original feature) will lead to \textit{high-order} feature crossing. 
The new space $\mathcal{T}$ considers all possible features in $A(\mathcal{F})$, but excludes part of its  subsets.
With $\mathcal{T}$, to search for a feature set is equivalent to identifying a \textit{path} from the root of $\mathcal{T}$ to a specific node.
This can be done by iteratively adding cross features into a maintained feature set.
However, the size of $\mathcal{T}$ is $O\left((d^2/2)^k\right)$ where $k$ is the maximum number of generated cross features. 
It grows exponentially with $k$.
Hence, it will be extremely expensive to exhaustively visit all possible solutions, or in other words, to traverse $\mathcal{T}$.
To address this issue, we employ beam search to further improve the efficiency.

\begin{algorithm}[t]
	\small
	\caption{Feature Set Search Strategy in AutoCross.}
	\label{alg:beam}
	\begin{algorithmic}[1]
		\REQUIRE original feature set $\mathcal{F}$.
		\ENSURE solution $\mathcal{S}^*$.
		\STATE initialize current node $\mathcal{S}^* \leftarrow \mathcal{F}$;
		\WHILE {procedure not terminated}
		\STATE \textbf{Feature Set Generation}: expand $\mathcal{S}^*$, generate its children node set $\code{children}(\mathcal{S}^*)$ by adding to itself different pair-wise crossing of its elements;
		\STATE \textbf{Feature Set Evaluation}: evaluate all candidate feature sets in $\code{children}(\mathcal{S}^*)$ and identify the best child $\mathcal{S}'$;
		%		 \leftarrow \code{best}\left(\code{children}(\mathcal{S}^*)\right)$;
		\STATE visit $\mathcal{S}'$: $\mathcal{S}^* \leftarrow \mathcal{S}'$
		\ENDWHILE
		
		\RETURN $\mathcal{S}^*$.
	\end{algorithmic}
\end{algorithm}

%\paragraph{beam search, expanding method, path}
\textit{Beam search}~\cite{medress1977speech} is a greedy strategy to explore a graph with low computation and memory costs.
The basic idea is to only expand the most promising nodes during search.
%In our case, we start at the original feature set (root of $\mathcal{T}$), and seek a promising feature set along a path.
%At each node, we evaluate all its children nodes, and visit  
First we generate all children nodes of the root, evaluate their corresponding feature sets and choose the best performing one to visit next.
In the process that follows, we expand the current node and visit its most promising child.
When the procedure is terminated, we end at a node that is considered as the solution.
By this means, we only consider $O\left(kd^2\right)$ nodes in a search space with size $O\left((d^2/2)^k\right)$, and the cost grows linearly with $k$, the maximal number of cross features.
It enables us to efficiently construct high-order cross features.
This feature set generation method leads to the main feature set search strategy in AutoCross, as described in Algorithm~\ref{alg:beam}.
Figure~\ref{fig:beam} highlights a search path that begins from the original feature set \code{\{A, B, C, D\}} and ends at \code{\{A, B, C, D, AB, CD, ABC, ABCD\}}, the solution. 

\subsection{Feature Set Evaluation}
\label{sec:eval}

%\paragraph{costs of direct evaluation}
A vital step in Algorithm~\ref{alg:beam} is to evaluate the performance of candidate feature sets (Step 4).
Here, the performance of a candidate set $\mathcal{S}$ is expressed as $\mathcal{E}\left(\mathcal{L}(\mathcal{D}_{tr}, \mathcal{S}), \mathcal{D}_{vld}, \mathcal{S}\right)$ (see Problem~\eqref{eq:def}), denoted as $\mathcal{E}(\mathcal{S})$ for short.
To directly estimate it, we need to learn a model with algorithm $\mathcal{L}$ on the training set represented by $\mathcal{S}$ and evaluate its performance on the validation set.
Though highly accurate, direct evaluation for feature sets is often rather expensive.
In real-world business scenarios,
%, where the data amount and feature number are large, 
training a model to convergence may take great computational resources.
%During the whole feature generation procedure, such evaluations are carried our repetitively,
Such direct evaluations are often too expensive to be invoked repetitively in the feature generation procedure.
%\st{
%Furthermore, direct evaluation is still an approximating approach since 
%the evaluation of $\mathcal{E}(\mathcal{S})$ depends on the learning algorithm $\mathcal{L}$ that will be used in subsequent training and inference. 
%Ideally, we should use the exact learning algorithm for model training to evaluate feature sets.
%While we can predetermine which class of models to use, e.g., logistic regression or SVM, we can hardly determine the hyper-parameters of the learning algorithm since their optimal values, in turn, depend on the features.
%}
In order to improve the evaluation efficiency, we proposed field-wise logistic regression and successive mini-batch gradient descent in AutoCross.

%\paragraph{field-wise LR algorithm as an approximation}
\subsubsection{Field-wise Logistic Regression}

%In order to improve efficiency, in AutoCross, we propose a \textit{field-wise logistic regression} (field-wise LR) method to evaluate candidate feature sets.
Our first effort to accelerate feature set evaluation is \textit{field-wise logistic regression} (field-wise LR).
%\wms{Here, a field~\cite{juan2016field,qu2016product,zhang2016deep,guo2017deepfm,wang2017deep,lian2018xdeepfm} uniquely corresponds to a binarized categorical feature.}
Two approximations are made.
%\footnote{*** does the learning algorithm include the model and the optimization method} 
First,
we use logistic regression~(LR) trained with mini-batch gradient descent to evaluate candidate feature sets, and use the corresponding performance to approximate the performance of the  learning algorithm $\mathcal{L}$ that actually follows.
We choose logistic regression since, as a generalized linear model, it is the most widely used model in large scale machine learning. It is simple, scalable, fast for inference, and makes interpretable predictions~\cite{rosales2012post,chapelle2015simple,cheng2016wide}.
%\footnote{*** do we have other reference?}
%\wms{
%Also, a well-learned LR model can accompany deep models and enhance it with explicitly constructed features~\cite{cheng2016wide}.
%}

The second approximation is that, during model training, we only learn the weights of the newly added cross feature, while other weights are fixed. 
Hence, the training is `field-wise'.
For example, assume the current solution feature set is $\mathcal{S}^* = \code{\{A, B, C, D\}}$, and we want to evaluate a candidate set $\mathcal{S} = \code{\{A, B, C, D, AB\}}$. 
Only the weights of \code{AB} is updated in training.
%\footnote{Please note that since \code{AB} is a field, it has multiple weights, each corresponding to a bit of the binary vector.}
Formally, denote an instance as $\vect{x} = [\vect{x}_s^\mathrm{T}, \vect{x}_c^\mathrm{T}]^\mathrm{T}$, where $\vect{x}_s$ corresponds to all features in the current solution and $\vect{x}_c$ the newly added cross feature. Their corresponding weights are $\vect{w} = [\vect{w}_s^\mathrm{T}, \vect{w}_c^\mathrm{T}]^\mathrm{T}$.
An LR model makes prediction:
%\begin{eqnarray}
%\label{eq:lr}
%P(y=1|\vect{x}) & = & s(\vect{w}^\mathrm{T}\vect{x}) = s(\vect{w}_s^\mathrm{T}\vect{x}_s + \vect{w}_c^\mathrm{T}\vect{x}_c) \\
%& =&  s(\vect{w}_c^\mathrm{T}\vect{x}_c + b_{sum}),
%\end{eqnarray}
\begin{equation}
\label{eq:lr}
P(y=1|\vect{x})  =  s(\vect{w}^\mathrm{T}\vect{x}) = s(\vect{w}_s^\mathrm{T}\vect{x}_s + \vect{w}_c^\mathrm{T}\vect{x}_c) =s(\vect{w}_c^\mathrm{T}\vect{x}_c + b_{sum}),
\end{equation}
where $s(\cdot)$ is the sigmoid function.
In field-wise LR, we only update $\vect{w}_c$, and 
since we fix $\vect{w}_s$, $b_{sum}$ is a constant scalar during training.
We cache the values of $b_{sum}$ in the memory so that they can be directly fetched by the workers.

%\paragraph{cost of field-wise LR}
%\footnote{*** 
%computing: since the values are either 0 or 1, so just need to add the parameters of hot bits. the cross feature does not need to be actually generated as a whole. we only need to calculate which bits are set to 1 and add their weights to get the loss.
%also cuts down transmitting and storage overhead
%}
%\footnote{*** why not group lasso}
%\footnote{*** \wms{hashing trick}}
%\footnote{*** \wms{say more}}
Figure~\ref{fig:field} shows how field-wise LR works on the parameter server architecture.
%\footnote{*** hashing trick \cite{}}
%\wms{Data are cached in memory in shards.}
Field-wise LR improves the efficiency of feature set evaluation from several aspects:
1) \textit{Storage}: the workers store only $\vect{x}_c$ (in sparse format where only the hashed values are stored) and $b_{sum}$, rather than full representation of instances; there is a negligible overhead to store $b_{sum}$ in memory cache;
2) \textit{Transmitting}: the contents of transmission between the memory cache and workers are $b_{sum}$ and the hashed values of the features that are used to construct $\vect{x}_c$. Transmission of full instance representation is therefore spared;
3) \textit{Computation}: only $\vect{w}_c$ is updated, which reduces the computation burden of workers and parameter servers;
all workers directly fetch the stored $b_{sum}$, so that the latter need not to be repetitively calculated for every mini-batch at each worker.

\begin{figure}[t]
	\centering
	\includegraphics[width=0.8\columnwidth]{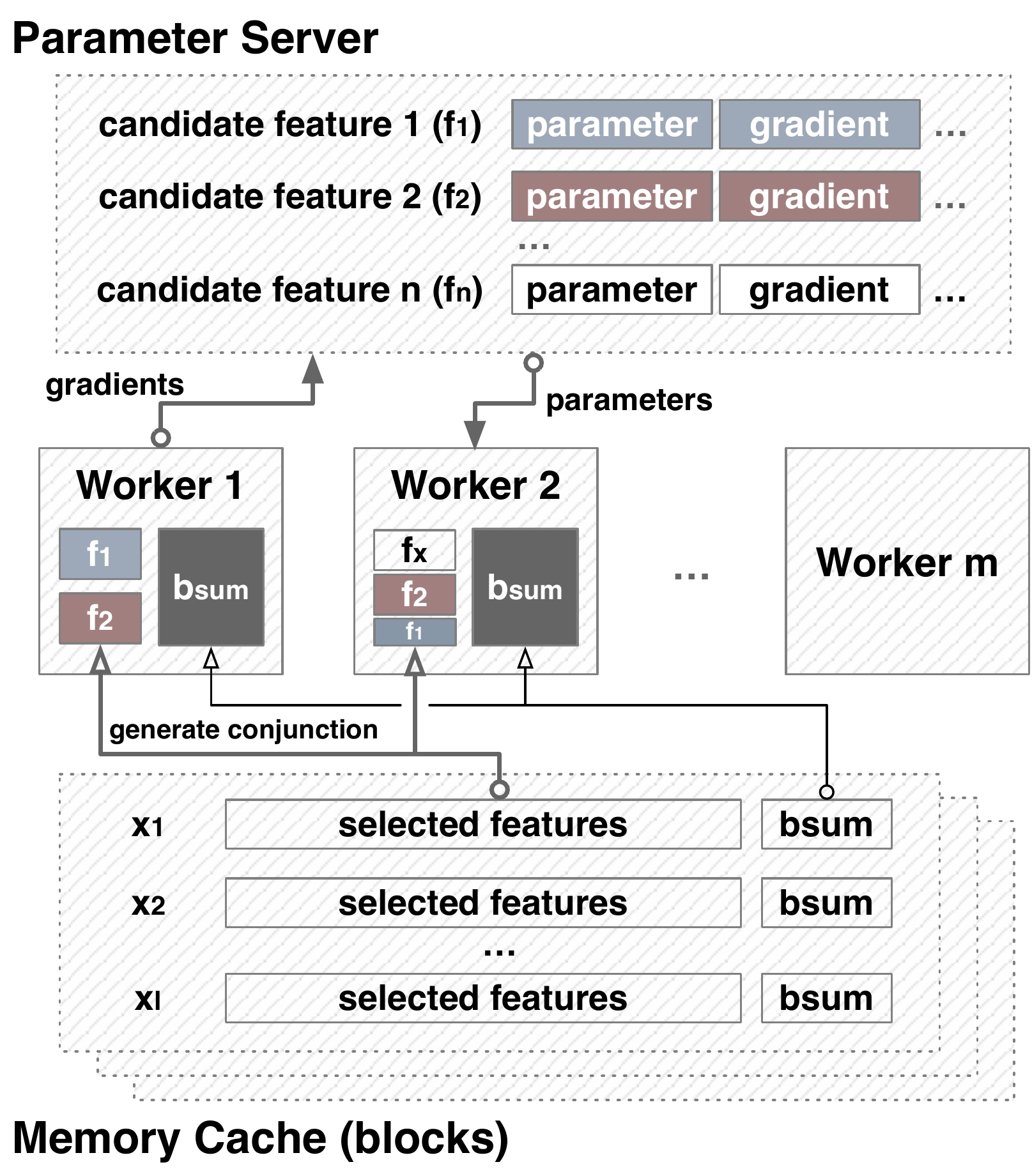}
	\vspace{-10px}
	\caption{Illustration of field-wise logistic regression for feature evaluation based on a parameter server architecture.
		%	The candidate features are not actually stored in the memory cache but generated by the feature manager 
		%		The whole procedure is based on a parameter server architecture. $\mathbf{b_{sum}}$ for each instance is cached in the memory and fetched by workers.}
	}	
%	\vspace{-15px}
	\label{fig:field}
\end{figure}

%\paragraph{the evaluator: based on the field-wise LR, sacrifices accuracy, experiments on synthetic data}
%\footnote{*** put the following here? or before the accuracy discussion?}
Once the field-wise LR finishes, we estimate the performance of the resulting model on the validation set $\mathcal{D}_{vld}$.
We use the resulting metrics $\mathcal{E}'(\mathcal{S})$, such as Area-Under-Curve~(AUC), accuracy, or negative log-loss, to evaluate the quality of $\mathcal{S}$.
Obviously, 
$\mathcal{E}'(\mathcal{S})$ is an approximation of $\mathcal{E}(\mathcal{S})$, with accuracy traded for higher efficiency.
However, since the purpose of feature set evaluation is to \textit{identify the most promising candidate}, rather than \textit{to accurately estimate the performance of candidates}, 
a degraded accuracy is acceptable if only it can recognize the \textit{best} candidate with high probability.
Experimental results reported in Section~\ref{sec:exp} demonstrate the effectiveness of field-wise LR. 
%\wms{
%	\footnote{*** change this part once we have results on this}
%	\footnote{*** move to experiment section?}

%%%%%%%%%%%%%%%%%%%%%%%%%%%%%%%%
%\wms{
%We performed experiments on one hundred synthetic datasets to compare the resulting AUCs of feature crossing methods, with field-wise LR and direct evaluation (vanilla LR), respectively.  Figure~\ref{fig:field_test} shows the result.
%It can be observed that, for most of the times, field-wise performs as well as direct evaluation.
%This is because, though with a degraded accuracy, it basically identifies the best candidate.
%}
%%%%%%%%%%%%%%%%%%%%%%%%%%%%%%%%
%We performed experiments on a set of synthetic datasets to compare the resulting AUCs of feature crossing methods with field-wise LR and direct evaluation (vanilla LR) as evaluation approach, respectively.
%The results shown
%Experimental results (Section~\ref{sec:exp_field}) show that, though with a degraded accuracy, field-wise LR identifies the best candidate correctly for most of the times. 
%The results are reported in Section~\ref{sec:exp_field}.
%}

%\begin{figure}[t]
%	\centering
%	\includegraphics[width=0.9\columnwidth]{}
%	\caption{Comparison between field-wise LR and vanilla LR as feature set evaluators. The x-axis indicates the resulting AUC ratio between them. One hundred synthetic datasets are used.}
%	\label{fig:field_test}
%\end{figure}

After a candidate is selected to replace the current solution $\mathcal{S}^*$ (Step 6, Algorithm~\ref{alg:beam}), we train an LR model with the new $\mathcal{S}^*$, evaluate its performance, and update $b_{sum}$ for data blocks that will be used in the next iteration. Details will be discussed immediately.

\subsubsection{Successive Mini-batch Gradient Descent}

In AutoCross, we use a successive mini-batch gradient descent method to further accelerate field-wise LR training.
It is motivated by the successive halving algorithm~\cite{jamieson2016non} which was originally proposed for multi-arm bandit problems.
Successive halving features an efficient allocation of computing resources and high simplicity.
In our case, we consider each candidate feature set as an arm, and a pull of the arm is to train the corresponding field-wise LR model with a data block.
The instant reward of pulling an arm is the resulting validation AUC of the partially trained model.
The training data is equally split into $N \ge \sum_{k=0}^{\lceil\log_2n\rceil-1}2^k$ data blocks, where $n$ is the number of candidates.
%At the $k$-th iteration of successive subsampling, $2^{k-1}$ mini-batches is used to update the model of all candidate feature sets.
%After that 
Then we invoke Algorithm~\ref{alg:subsample} to identify the best candidate feature set.
%Successive subsampling allocates exponentially more computational resources to 
Successive mini-batch gradient descent allocates more resources to more promising candidates.
%The last two remaining candidates
The only hyper-parameter $N$, namely the number of data blocks, is adaptively chosen according to the size of data set and the working environment.
Users do not need to tune the mini-batch size and sample ratios that are critical for vanilla subsampling.

\begin{algorithm}[ht]
	\small
	\caption{Successive Mini-batch Gradient Descent (SMBGD).}
	\label{alg:subsample}
	\begin{algorithmic}[1]
		\REQUIRE set of candidate feature sets $\mathbb{S} = \{\mathcal{S}_i\}_{i=1}^n$, training data equally divided into $N \ge \sum_{k=0}^{\lceil\log_2n\rceil-1}2^k$ data blocks.
		\ENSURE best candidate $\mathcal{S}'$.
		%		\STATE initialize current node $\mathcal{S}^* \leftarrow \mathcal{F}$;
		\FOR{$k = 0, 1, \cdots, \lceil\log_2n\rceil-1$}
		\STATE use additional $2^k$ data blocks to update the field-wise LR models of all $\mathcal{S} \in \mathbb{S}$, with warm-starting;
		%		\textbf{Feature Set Generation}: expand $\mathcal{S}^*$, generate its children node set $\code{children}(\mathcal{S}^*)$ by adding to itself different pair-wise crossing of its elements;
		\STATE evaluate the models of all $\mathcal{S}$'s with validation AUC;
		%		\textbf{Feature Set Evaluation}: evaluate all candidate feature sets in $\code{children}(\mathcal{S}^*)$;
		%		\STATE identify the best child: $\mathcal{S}' \leftarrow \code{best}\left(\code{children}(\mathcal{S}^*)\right)$;
		%		\STATE visit $\mathcal{S}'$: $\mathcal{S}^* \leftarrow \mathcal{S}'$
		\STATE keep the top half of candidates in $\mathbb{S}$: $\mathbb{S} \leftarrow \code{top\_half}(\mathbb{S})$ (rounding down);
		\STATE break if $\mathbb{S}$ contains only one element;
		\ENDFOR
		\RETURN $\mathcal{S}'$ (the singleton element of $\mathbb{S}$).
	\end{algorithmic}
\end{algorithm}

%\vspace{-10px}
\subsection{Preprocessing}
\label{sec:pre}

%\paragraph{multi-granularity discretization}
%Though categorical features are very informative and important in our business scenario, numerical ones are no less valuable.
%While the interactions among numerical features are beyond the scope of this paper, in AutoCross, we make effort to capture their interactions with categorical features.
In AutoCross, we use \textit{discretization} in the data preprocessing step to enable feature crossing between numerical and categorical features.
%This is done by \textit{discretization} in the data preprocessing step, which has been proven useful to improve predicting capability of numerical
Discretization has been proven useful to improve predicting capability of numerical features~\cite{liu2002discretization,kotsiantis2006discretization,chapelle2015simple}.
The most simple and widely-used discretization method is equal-width discretization, i.e., to split the value range of a feature into several equal-width intervals.
However, in traditional machine learning applications, the number of intervals, named as \textit{granularity} in our work, has a great impact on the learning performance and should be carefully determined by human experts.

In order to automate discretization and  spare its dependence on human experts, we propose a \textit{multi-granularity discretization} method.
The basic idea is simple: instead of using a fine-tuned granularity, we discretize each numerical feature into several, rather than only one, categorical features, each with a different granularity.
Figure~\ref{fig:lfc} gives an illustration of discretizing a numerical feature with four levels of granularity.
Since more levels of granularity are considered, it is more likely to get a promising result.
%\footnote{*** add one sentence to point out the effect}
%\wms{
%	Simultaneously considering multiple levels of granularity increases the probability of good discretization.

%}

%As a cost of the improved simplicity, multi-granularity discretization may significantly increase the number of features.
%To address this issue, 
In order to avoid the dramatic increase in feature number caused by discretization,
once these features are generated, we use field-wise LR (without considering $b_{sum}$) to evaluate them and keep only the best half.
%\wms{
	A remaining problem is how to determine the levels of granularity.
	For an experienced user, she can set a group of potentially good values.
	If no values are specified, AutoCross will use $\{10^{p}\}_{p=1}^P$ as default values, where
	$P$ is an integer determined by a rule-based mechanism that considers the available memory, data size and feature numbers.
%}

\begin{figure}[t]
	\centering
	\includegraphics[width=0.86\columnwidth]{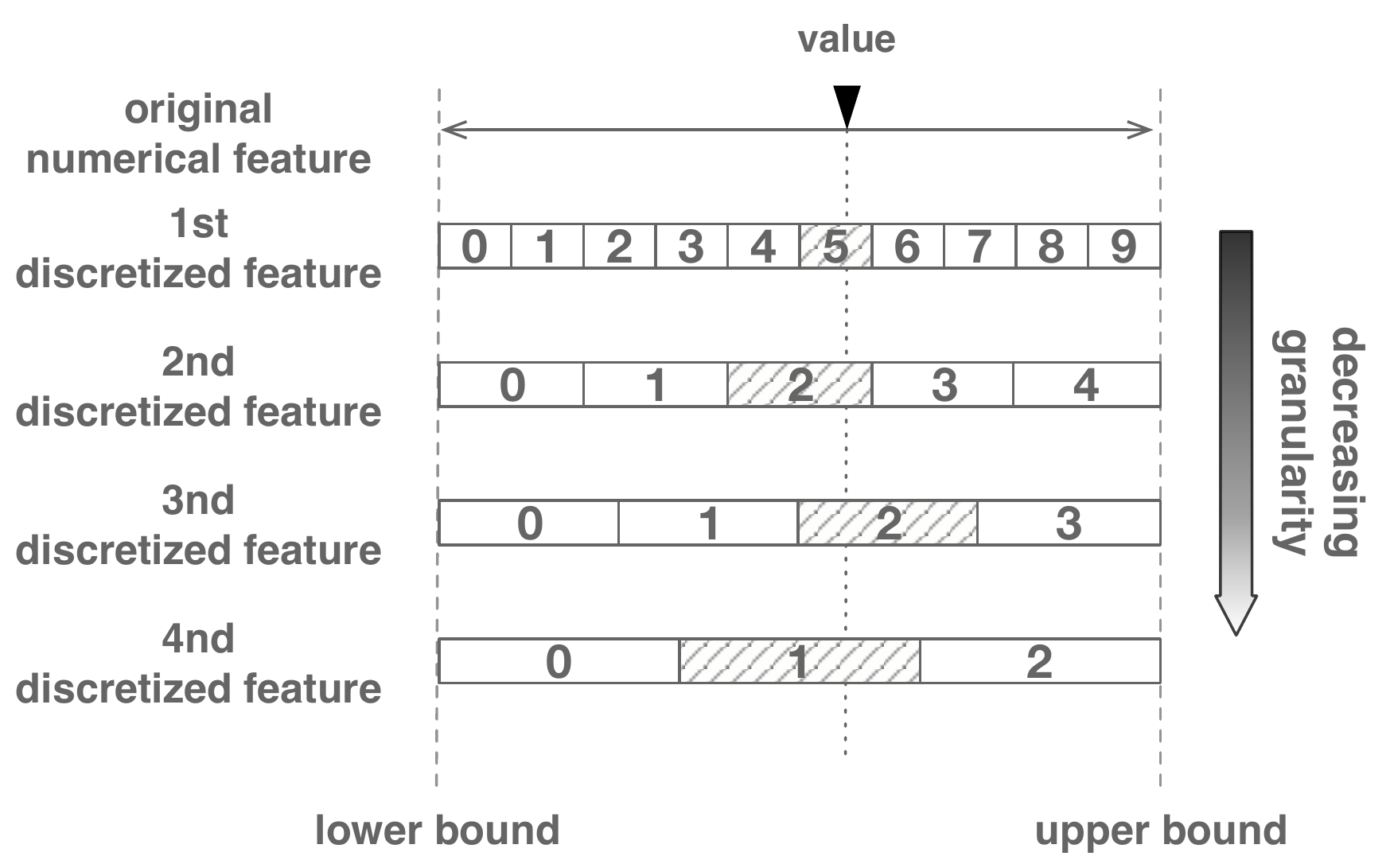}
	\vspace{-10px}
	\caption{An illustration of multi-granularity discretization. Shade  indicates the value taken by each discretized feature.}
%		The interval of the original feature is equally divided with four levels of granularity and four categorical features are hence generated. Shades indicate the value each discretized feature takes.}
	\vspace{-15px}
	\label{fig:lfc}
\end{figure}

In addition, AutoCross will invoke a tuning procedure in the preprocessing step to find optimal hyper-parameters for LR models. They will be used in all LR models involved subsequently.

\subsection{Termination}

%\footnote{*** maybe move to experiments}
Three kinds of termination conditions are used in AutoCross:
1) \textit{runtime condition}: the user can set a maximal runtime of AutoCross. When the time elapses, AutoCross terminates outputs the current solution $\mathcal{S}^*$. Additionally, the user can always interrupt the procedure and get the result of the time;
2) \textit{performance condition}: after a new feature set is generated (Step 6, Algorithm~\ref{alg:beam}), an LR model is trained with all its features. If, compared with the former set, the validation performance degrades, the procedure is terminated;
3) \textit{maximal feature number}: the user can give a maximal cross feature number so that AutoCross stops when the number is reached.

\section{Experiments}
\label{sec:exp}
%\footnote{*** 'open' or 'benchmark'???}
In this section, we demonstrate the effectiveness and efficiency of AutoCross.
%\footnote{*** declare the purpose of the subsequent experiments}
First, by comparing AutoCross with several reference methods on both benchmark and real-world business datasets, we show that with feature crossing it can significantly improve the performance of both linear and deep models,
and that high-order cross features are useful.
Then we report the time costs of feature crossing with AutoCross. Finally, we show the advantage of AutoCross in real-time inference.
%Then, we run detailed experiments to show that its advantage in online inference, and that field-wise LR can efficiently identify best candidate feature set with moderate accuracy.

%All experiments are carried out on a workstation with Intel(R) Xeon(R) CPU (E5-2630 v4 @ 2.20GHz, 20 cores), 
%512G memory and 8T hard disk.

\subsection{Setup}

\paragraph{Datasets:}
we test AutoCross with both benchmark and real-world business datasets, gathered from different  applications. 
Table~\ref{tab:benchmark}
summarizes these datasets\footnote{Availability of data sets are in Appendix~\ref{app:data}.}.
%\footnote{It should be mentioned that though our business scenarios include banking and medical treatment, we do not have access to these data since they are highly sensitive. Hence, all the real-business data are from online advertising and recommendation applications.}
All the datasets are for binary classification. 
%\footnote{*** say more about the real-business data}
%\wms{
The real-world business datasets are provided by the customers of 4Paradigm with sanitization. 
%Most of them are from online advertising and recommendation applications.
%}
%We also list the number of different values taken by all the categorical features (\# Val.) in each dataset.
%If it is a big number, after encoding (one-hot encoding or hashing trick), categorical features are converted into high-dimensional sparse form. 
%The data dimensionality is exactly sum of the number of numerical features (\# Num.) and the number of values (\# Val.).
%The dimensionality and sparsity will further increase if we discretize the numerical features.

\begin{table}[ht]
	\centering
	\caption{Characteristics of datasets used in the experiments.
		`Num.' and `Cate.' indicate numerical and categorical features respectively. `\# Val.' indicates the number of different values taken by the categorical features. `H.R.' is short for `human resource' and `Adv.' for `advertising'.}
	\label{tab:benchmark}
	\vspace{-10px}
	\scalebox{0.8}
	{\begin{tabular}{c | c | c | c | c | c | c}
			\hline\hline
			\multicolumn{7}{c}{Benchmark Datasets}\\
			\hline
			\multirow{2}{*}{Name} & \multicolumn{2}{c|}{\# Samples} & \multicolumn{3}{c|}{\# Features} &\multirow{2}{*}{Domain} \\ \cline{2-6}
			& Training   & Testing               & \# Num. & \# Cate.        & \# Val.    &           \\ \hline
			Bank      & 27,459     & 13,729                & 10   & 10    &    63      &      Banking     \\
			Adult      & 32,561     & 16,281                & 6    & 8      &     42    &     Social      \\
			Credit     & 100,000    & 50,000                & 10   & 0        &   0    &     Banking      \\
			Employee & 29,493      &  3,278                 & 0  & 9  &  7,518     & H. R. \\
%			RecData & 2,280,328  & 1,080,855             & 34   & 28                 &     Rec. Sys.      \\
%			Higgs      & 10,000,000 & 1,000,000             & 28   & 0               &     -      \\
			Criteo     & 41,256 K & 4,584 K             & 13   & 26        &  33,762 K    &     Adv.      \\ \hline\hline
%	\end{tabular}}
%\end{table}
			\multicolumn{7}{c}{Real-World Business Datasets}\\\hline
%\begin{table}[ht]
%	\centering
%	\caption{Characteristics of real-world business datasets.}
%	\label{tab:business}
%	%\vspace{-10px}
%	\scalebox{0.85}
%	{\begin{tabular}{c| c | c |c | c | c | c}
%			\hline\hline
			\multirow{2}{*}{Name} & \multicolumn{2}{c|}{\# Samples} & \multicolumn{3}{c|}{\# Features} & \multirow{2}{*}{Domain} \\ \cline{2-6}
			& Training   & Testing               & \# Num. & \# Cate.        & \# Val.   &          \\ \hline
			Data1      & 2,641,185    & 719,998              & 34   & 28    &     4,181,854    & Sports \\
			Data2      & 1,888,366    & 1,119,778            & 8    & 19     &     109,180   &  Talkshow \\
			Data3     & 2,340,209    & 1,059,016             & 55   & 21        &     3,174,081 &  Social \\
			Data4     & 2,848,746      &  688,481             & 7  & 19  &  455,778    & Video \\
			Data5     & 11,802,126      &  2,058,424         & 8  & 18  &  436,361    & News \\
			\hline\hline
	\end{tabular}}
\vspace{-10px}
\end{table}

\paragraph{Methods:}
in order to demonstrate the effectiveness of AutoCross, we compare the following methods;
%we compare the following methods:
\begin{itemize}
	\item \textbf{AC+LR}: logistic regression with cross features generated by AutoCross;
	\item \textbf{AC+W\&D}: Wide \& Deep method~\cite{cheng2016wide} whose wide part uses cross features generated by AutoCross;
	\item \textbf{LR (base)}: our self-developed logistic regression with only the original features. It is used as the baseline;
	\item \textbf{CMI+LR}: logistic regression with cross features generated by the method proposed in~\cite{chapelle2015simple}, where conditional mutual information~(CMI) is used as the metric to evaluate features. This method only considers second-order feature crossing;
	\item \textbf{Deep}: a deep model with embedding layers to deal with categorical features. It implicitly generate feature interactions;
	\item \textbf{xDeepFM}: the method proposed in~\cite{lian2018xdeepfm}, which explicitly generates features with a compressed interaction network. It is the state-of-the-art of deep-learning-based method.
%	We use the open-source implementation provided by the authors.
\end{itemize}
In these methods, \textbf{AC+LR} and \textbf{AC+W\&D} use the cross features generated by AutoCross, and demonstrate its effectiveness to enhance linear and deep models.
%We use LR with only the original features as 
\textbf{CMI+LR} uses a representative search-based feature crossing method.
\textbf{xDeepFM} is the state-of-the-art method following the Wide \& Deep framework. We choose it as a reference method since it outperforms other existing deep-learning-based methods, as reported in~\cite{lian2018xdeepfm}.
We also consider \textbf{Deep} to test how a bare-bone deep model performs.
All these methods are designed to handle tabular data with categorical features.

\paragraph{Reproducibility:}
the features and models are learned with training and validation data (20\% of the training data, if needed), and the resulting AUCs on the testing data indicate the performance of different methods.
%For methods that need validation set, we use 20\% of the training data as the validation set.
%For \textbf{xDeepFM}, we use the open-source implementation provided by the authors. For \textbf{AC+W\&D} and \textbf{Deep}, we use the deep part of xDeepFM as the deep component.
%The first two methods (\textbf{AC+LR} and \textbf{AC+W\&D}) indicate the power of AutoCross to be used alone and to enhance a deep model.
%\textbf{LR} and \textbf{DNN}
\textit{
More information about the settings of
methods under test can be found in Appendix~\ref{app:setting}.
}

%\paragraph{how to split training data set?? hyperparameters: for LR, train an LR model with original features and reuse its hyperparameters.}

\subsection{Results}

\subsubsection{Effectiveness}
Table~\ref{tab:res} reports the resulting test AUCs on the benchmark and real-world business datasets.
We did not run \textbf{CMI+LR} on real-world business datasets because there are multi-value categorical features that cannot be handled by CMI.
In the table, we highlighted the top-two methods for each dataset.
As can be easily observed, \textbf{AC+LR} ranks top-two in most cases, and often outperforms deep-learning-based methods (\textbf{Deep} and \textbf{xDeepFM}). \textbf{AC+W\&D} also shows competitive performance, demonstrating the capability of AutoCross to enhance deep models.
In most cases, \textbf{AC+LR} and \textbf{AC+W\&D} show better results than \textbf{xDeepFM}. According to Proposition~\ref{prop:cin}, xDeepFM only generates a special case of embedded cross feature.
This results show the effectiveness to directly and explicitly generate high-order cross features.
%depends on the datasets, but is also best or competitive in many cases.
%\subsubsection{Benchmark Datasets}

\begin{table}[ht]
	\centering
	\caption{Experimental results (test AUC) on benchmark and real-world business datasets.
%		 The top-2 methods are highlighted.
}
	\label{tab:res}
	\vspace{-10px}
	\scalebox{0.85}
	{\begin{tabular}{c| c  c  c  c  c }
			\hline\hline
			\multicolumn{6}{c}{Benchmark Datasets}\\
			\hline
			Method & Bank & Adult & Credit & Employee & Criteo \\\hline
			LR (base) & 0.9400 & 0.9169 & 0.8292 & 0.8655 & 0.7855 \\
			\textbf{AC+LR} & \textbf{0.9455} & \textbf{0.9280} & \textbf{0.8567} & \textbf{0.8942} & 0.8034\\
			\textbf{AC+W\&D} & 0.9420 & \textbf{0.9260} & \textbf{0.8623} & \textbf{0.9033} & \textbf{0.8068} \\
			CMI+LR & \textbf{0.9431} & 0.9153 & 0.8336 & 0.8901 & 0.7844\\
			Deep & 0.9418 & 0.9130 & 0.8369 & 0.8745 & 0.7985 \\
			xDeepFM & 0.9419 & 0.9131 & 0.8358 & 0.8746 & \textbf{0.8059} \\
			\hline\hline
			\multicolumn{6}{c}{Real-World Business Datasets}\\
			\hline
			Method & Data1 & Data2 & Data3 & Data4 & 
			Data5 
			\\\hline
			LR (base) & 0.8368 & 0.8356 & 0.6960 & 0.6117 & 
			0.5992  
			\\
			\textbf{AC+LR} & \textbf{0.8545} & \textbf{0.8536} & \textbf{0.7065} & \textbf{0.6276} & 
			0.6393
			\\
			\textbf{AC+W\&D} & \textbf{0.8531} & \textbf{0.8552} & \textbf{0.7026} & \textbf{0.6260} &
			\textbf{0.6547}
			\\
			Deep & 0.8479 & 0.8463 & 0.6936 & 0.6207 &
			0.6509  
			\\
			xDeepFM & 0.8504 & 0.8515 & 0.6936 & 0.6241 &
			\textbf{0.6514}  
			\\
			\hline\hline
	\end{tabular}}
\end{table}

%\paragraph{why we don't use CMI in real business datasets}

As has been discussed in the papers of Wide \& Deep~\cite{cheng2016wide} and DeepFM~\cite{guo2017deepfm}, in online recommendation scenarios,
small improvement (0.275\% in \cite{cheng2016wide} and 0.868\% in \cite{guo2017deepfm}, compared with LR) in off-line AUC evaluation can lead to a significant increase in online CTR and hence great commercial benefits.
%In our case, \textbf{AC+LR} brings more than $1\%$ off-line AUC improvement compared with \textbf{LR (base)} on all real-world business datasets (Table~\ref{tab:auc}).
%It is also worth mentioning that, on the \code{Criteo} dataset, our baseline \textbf{LR (base)} (0.7855 AUC) outperforms the LR models used in \cite{guo2017deepfm} (0.7686 AUC) and \cite{lian2018xdeepfm} (0.7577 AUC).
Table~\ref{tab:auc} shows the test AUC improvement brought by AutoCross.
It can be observed that both \textbf{AC+LR} and \textbf{AC+W\&D} achieve significant improvement over \textbf{LR (base)}, and \textbf{AC+W\&D} also considerably improve the performance of deep model.
These results demonstrate that by generating cross features, AutoCross can make the data more informative and discriminative, and improve the learning performance.
The promising results achieved by AutoCross also demonstrate the capability of field-wise LR to identify useful cross features.
%In some cases, it enables an LR model to outperform deep models.

\begin{table}[ht]
	\centering
	\caption{Test AUC improvement v.s. LR (base) and Deep.
		%		 The top-2 methods are highlighted.
	}
	\label{tab:auc}
	%\vspace{-10px}
	\vspace{-10px}
	\scalebox{0.85}
	{\begin{tabular}{c  c  c  c  c | c}
			\hline\hline
%			\multicolumn{7}{c}{Benchmark Datasets}\
			\multicolumn{6}{c}{\textbf{AC+LR} v.s. \textbf{LR (base)}}\\
			\hline
			Bank & Adult & Credit & Employee & Criteo & \textbf{Average} \\
			0.585\% & 1.211\% & 3.316\% & 3.316\% & 2.279\%  & 2.141\%\\
			\hline
			Data1 & Data2 & Data3 & Data4 & 
			Data5 
			& \textbf{Average}\\
			2.115\% & 2.154\% & 1.509\% & 2.599\% & 
			6.692\% 
			&  
			3.014\%	%			2.094\% 
			\\
			\hline\hline
			\multicolumn{6}{c}{\textbf{AC+W\&D} v.s. \textbf{LR (base)}}\\\hline
			Bank & Adult & Credit & Employee & Criteo & \textbf{Average} \\
			0.213\% & 0.992\% & 3.992\% & 4.367\% & 2.712\% & 
			2.455\% \\\hline
			Data1 & Data2 & Data3 & Data4 & 
			Data5 
			& \textbf{Average}\\
			1.948\% & 2.346\% & 0.948\% & 2.338\%  &
			9.546\%		  
			& 
			3.368\% \\
			\hline\hline
			\multicolumn{6}{c}{\textbf{AC+W\&D} v.s. \textbf{Deep}}\\\hline
			Bank & Adult & Credit & Employee & Criteo & \textbf{Average} \\
			0.021\% & 1.424\% & 3.035\% & 3.293\% & 1.039\% & 1.763\% \\\hline
			Data1 & Data2 & Data3 & Data4 & 
			Data5 
			& \textbf{Average}\\
			0.6133\% & 1.0516\% & 1.2976\% & 0.8539\% & 0.5361\%
			& 0.880\%
			 \\
			\hline\hline
	\end{tabular}}
\vspace{-10px}
\end{table}

%With these cross features an LR model can outperform a deep model.

\subsubsection{The effect of high-order features.}
With the above reported results, we have demonstrated the effect of AutoCross.
Figure~\ref{fig:dist} shows the number of second/high-order cross features generated for each dataset, where the latter take a considerable proportion.
Besides, in Table~\ref{tab:secvshigh},
%we show the performance improvements brought by second-order cross features (generated by \textbf{CMI+LR})
%and high-order ones (by \textbf{AC+LR}).
we compare the performance improvements brought by \textbf{CMI+LR}, that only generates second-order cross features, and \textbf{AC+LR} that considers high-order feature crossing.
We can see that \textbf{AC+LR} stably and constantly outperforms \textbf{CMI+LR}.
This result demonstrates the usefulness of high-order cross features.

\begin{figure}[ht]
	\centering
	\vspace{-10px}
	\includegraphics[width=0.8\columnwidth]{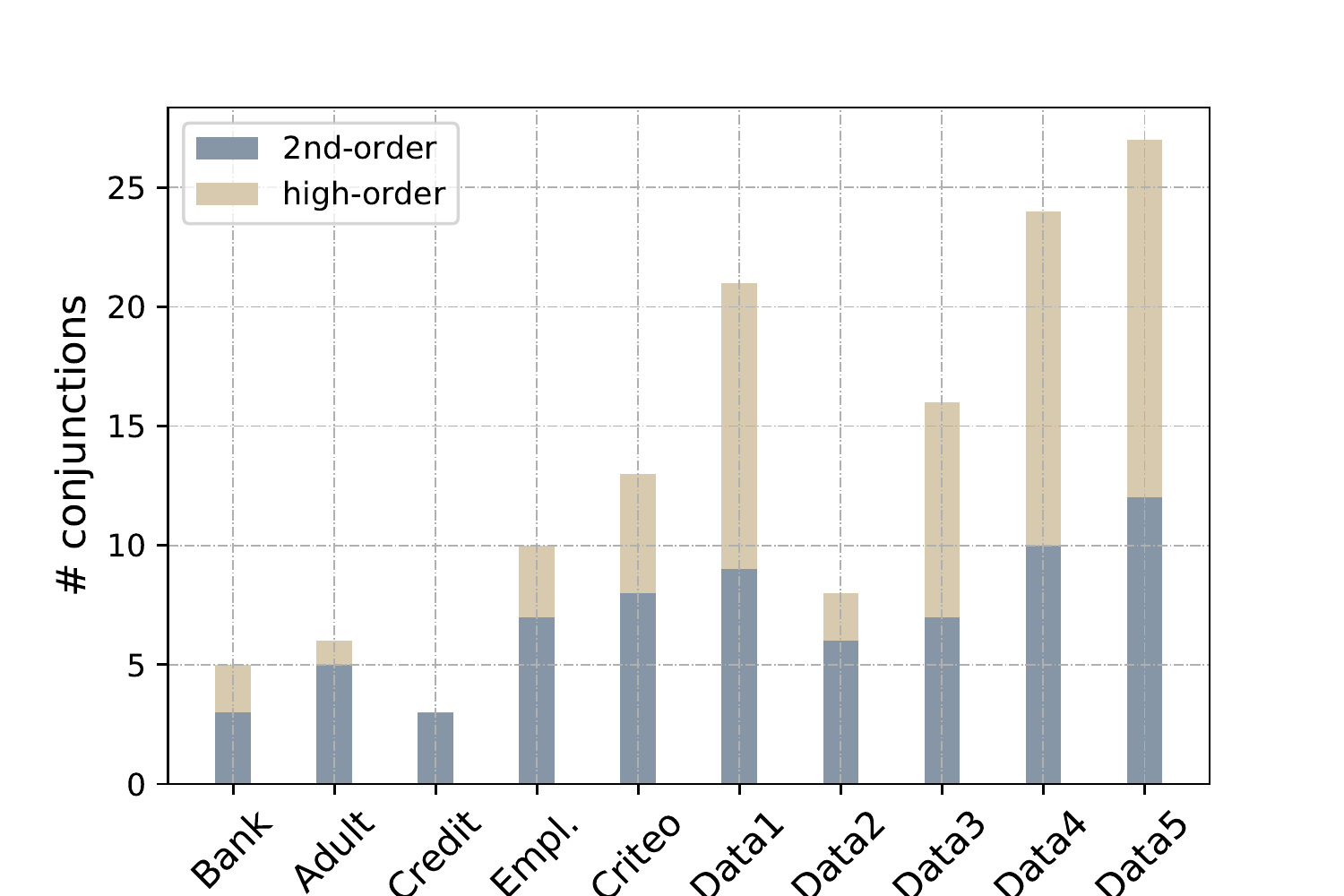} %{}
	\vspace{-5px}
	\caption{The number of second/high-order cross features generated for each dataset.}
	\label{fig:dist}
	\vspace{-10px}
\end{figure}

\begin{table}[ht]
\centering
\caption{Test AUC improvement: second v.s. high order features on benchmark datasets.}
\vspace{-10px}
\scalebox{0.80}
{\begin{tabular}{c| c  c  c  c  c | c}
	\hline\hline
	$\!\!$v.s. \textbf{LR(base)}$\!\!$ & Bank    & $\!\!$Adult$\!\!$    & Credit  & $\!\!$Employee$\!\!$ & $\!\!$Criteo   & $\!\!$\textbf{Average}$\!\!$ \\ \hline
	         \textbf{CMI+LR}           & 0.330\% & $\!\!$-0.175\% & 0.531\% & 2.842\%  & $\!\!$-0.140\% & 0.678\%                      \\ \hline
	          \textbf{AC+LR}           & 0.585\% & 1.211\%  & 3.316\% & 3.316\%  & 2.279\%  & 2.141\%                      \\
	          \hline\hline
\end{tabular}}
\label{tab:secvshigh}
\end{table}

\subsubsection{Time costs of feature crossing.}
Table~\ref{tab:time} reports the feature crossing time of AutoCross on each dataset.
Figure~\ref{fig:auccurve} shows the validation AUC (\textbf{AC+LR}) versus runtime on real-world business datasets.
Such curves are visible to the user and she can terminate AutoCross at any time to get the current result.
It is notable that due to the high simplicity of AutoCross, no hyper-parameter needs to be fine-tuned, and the user does not need to spend any extra time to get it work.
In contrast, if deep-learning-based methods are used, plenty of time will be spent on the network architecture design and hyper-parameter tuning.

\begin{table}[ht]
	\centering
	\caption{Cross feature generation time (unit: hour).}
	\label{tab:time}
	\vspace{-10px}
	\scalebox{0.8}
	{\begin{tabular}{c c c c c  }
			\hline\hline
			\multicolumn{5}{c}{Benchmark Datasets}\\
			\hline
			Bank & Adult & Credit & Employee & Criteo  \\\hline
			0.0267 & 0.0357 & 0.3144 & 0.0507 & 3.0817  \\
			\hline\hline
			\multicolumn{5}{c}{Real-World Business Datasets}\\
			\hline
			%			\multirow{2}{*}{Name} & \multicolumn{2}{c|}{\# Samples} & \multicolumn{3}{c|}{\# Features} & \multirow{2}{*}{Domain} \\ \cline{2-6}
			%			& Training   & Testing               & Num. & Cate.        & \# Val.   &          \\ \hline
			Data1 & Data2 & Data3 & Data4 & 
			Data5  
			\\\hline
			0.9327 & 0.7973 & 1.5206 & 2.7572 & 
			5.1861  
			\\
			\hline\hline
	\end{tabular}}
\end{table}

\begin{figure}[t]
	\centering
	\vspace{-5px}
	\includegraphics[width=0.8\columnwidth]{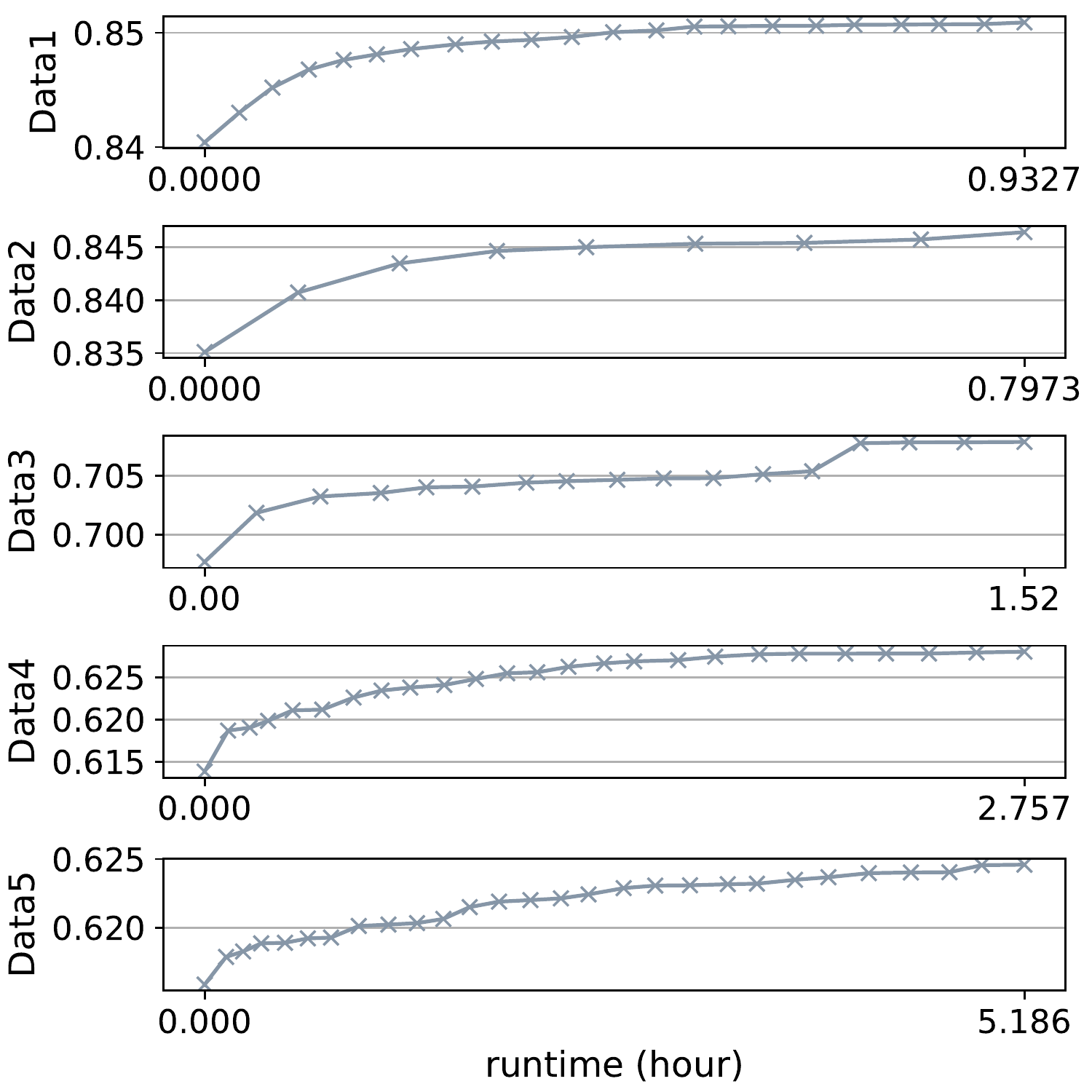} %{}
	%	\vspace{0.5in}
	\vspace{-5px}
	\caption{Validation AUC curves in real-business datasets.}
	\vspace{-10px}
	\label{fig:auccurve}
\end{figure}

%\paragraph{number of generated cross features, 2nd-order and high-order}

%\subsection{Other Experiments}

%\subsection{}

\subsubsection{Inference Latency}

%\begin{figure}[t]
%	\centering
%	\includegraphics[width=0.9\columnwidth]{}
%	\caption{The typical application scenario of AutoCross, comprising three stages: 1) off-line feature generation; 2) off-line model training; and 3) online inference.}
%	\label{fig:online}
%\end{figure}

%\footnote{*** move to experiment}
%Figure~\ref{fig:online} gives a typical application scenario of AutoCross. 
In many real-world businesses, the application scenario of a feature generation tool comprises three stages: 1) off-line feature generation;
2) off-line/online model training; 3) online inference.
In this scenario, the off-line generation stage is invoked the least frequently, for instance, features can be generated weekly or even monthly.
In contrast, within every millisecond, hundreds or thousands of inferences may sequentially take place, which makes high efficiency a must.
Online inference consists of two major steps: 1) feature producing to transform the input data, and 2) inference to make prediction.
Deep-learning method combines these steps.
%However, deep models suffer high inference latency, which makes model compression necessary~\cite{han2015deep}.
%Furthermore, deep models often implicitly generate features that are not interpretable.
In Table~\ref{tab:inference}, we report the inference time of \textbf{AC+LR}, \textbf{AC+W\&D}, \textbf{Deep} and \textbf{xDeepFM}.

\begin{table}[ht]
	\centering
	\caption{Inference latency comparison (unit: millisecond).}
	\label{tab:inference}
	\vspace{-10px}
	\scalebox{0.85}
	{\begin{tabular}{c | c c c c c  }
			\hline\hline
			\multicolumn{6}{c}{Benchmark Datasets}\\
			\hline
			Method & Bank & Adult & Credit & Employee & Criteo  \\\hline
			\textbf{AC+LR} & 0.00048 & 0.00048 & 0.00062 & 0.00073 & 0.00156  \\
			AC+W\&D & 0.01697 & 0.01493 & 0.00974 & 0.02807 & 0.02698  \\
			Deep & 0.01413 & 0.01142 & 0.00726 & 0.02166 & 0.01941 \\
			xDeepFM & 0.08828 & 0.05522 & 0.04466 & 0.06467 & 0.18985  \\
			\hline\hline
			%			\multirow{2}{*}{Name} & \multicolumn{2}{c|}{\# Samples} & \multicolumn{3}{c|}{\# Features} & \multirow{2}{*}{Domain} \\ \cline{2-6}
			%			& Training   & Testing               & Num. & Cate.        & \# Val.   &          \\ \hline
%			\hline\hline
			\multicolumn{6}{c}{Real-World Business Datasets}\\
			\hline
			Method & Data1 & Data2 & Data3 & Data4 & 
			Data5  
			\\\hline
			\textbf{AC+LR} & 0.00367 & 0.00111 & 0.00185 & 0.00393 &
			0.00279  
			\\
			AC+W\&D & 0.03537 & 0.01706 & 0.04042 & 0.02434 & 
			0.02582  
			\\
			Deep & 0.02616 & 0.01348 & 0.03150 & 0.01414 & 
			0.01406  
			\\
			xDeepFM & 0.32435 & 0.11415 & 0.40746 & 0.12467& 
			0.13235  
			\\
			\hline\hline
	\end{tabular}}
\end{table}

It can be easily observed that \textbf{AC+LR} is orders of magnitude faster than other methods in inference.
This demonstrates that, AutoCross can not only improve the model performance, but also ensure fast inference with its feature producer.
%On the other hand, \textbf{xDeepFM} is the slowest and model compression may be needed to deploy it into real-time inference systems.

%%%%%%%%%%%%%%%%%%%%%%%%%%%%%%%%%%%%%%%%%%%%%%%%%%%%%

%\subsubsection{Field-wise LR}
%\label{sec:exp_field}
%
%In order to 
%
%\begin{figure}[t]
%	\centering
%	\includegraphics[width=0.9\columnwidth]{}
%	\caption{Comparison between field-wise LR and vanilla LR as feature set evaluators.}
%	\label{fig:field_test}
%\end{figure}
%
%
%
%\subsubsection{Examples of Generated Cross Features}

%%%%%%%%%%%%%%%%%%%%%%%%%%%%%%%%%%%%%%%%%%%%%%%%%%%%%

\section{Related Works}
\label{sec:related}
In this section, we briefly review works that are loosely related to AutoCross and demonstrate why they do not suit our purpose.

%\footnote{*** move to Related Works}
Factorization machines seek low-dimensional embeddings of original features, and capture their interactions~\cite{cheng2014gradient,juan2016field,blondel2016higher}.
%\footnote{*** check this}
Such interactions, however, are not explicitly constructed. Furthermore, they may over-generalize~\cite{cheng2016wide} and/or introduce noise since they enumerate all possible interactions regardless of their usefulness~\cite{lian2018xdeepfm}.

There are also some embedded feature selection/generation methods, such as group lasso~\cite{meier2008group} and gradient boost machine~\cite{friedman2001greedy}, that intrinsically identify or implicitly construct useful features along model training.
However, these methods often struggle to deal with large scale problems with high-dimensional sparse data generated from categorical features, and/or have computational issues when the number of features is large, which happens when high-order feature crossing is considered.

Finally, itemsets~\cite{agrawal1993mining} 
%frequent patterns~\cite{han2000mining} and association rules~\cite{ng1998exploratory} 
have been well studied in data mining communities. Like cross features, they also represent the co-occurrence of attributes. However, the difference is that the elements in an itemset are often of a same kind, e.g., all being commodities. Also, itemsets are mostly used in rule-based machine learning techniques such as frequent patterns~\cite{han2000mining} and association rules~\cite{ng1998exploratory}. These techniques may have trouble to generalize, and are slow in inference when the number of rules is large, due to great retrieving costs~\cite{han2011data}.

\section{Conclusion}
\label{sec:con}
In this paper, we present AutoCross, an automatic feature crossing method for tabular data in real-world applications.
It captures useful interactions among categorical features and increases the predictive power of learning algorithms.
It employs beam search to efficiently construct cross features, which enables the consideration of high-order feature crossing, which is not yet visited by existing works.
Successive mini-batch gradient descent and multi-granularity discretization are proposed to further improve the efficiency and effectiveness while keeping high simplicity.
All the algorithms are designed for distributed computing to deal with big data in real-world businesses.
Experimental results show that AutoCross can significantly enhance learning from tabular data, outperforming other search-based and deep-learning-based feature generation methods dedicated to the same topic.

\bibliographystyle{ACM-Reference-Format}
{\small \bibliography{bib}}

\clearpage

\appendix

\section{Proof of Proposition~\ref{prop:cin}}
\label{app:cin}

%\begin{prop}
%%	\label{prop:cin}
%
%	
%\end{prop}
Since high-order interactions can be represented by pair-wise interaction of lower-order ones, without loss of generality, in Proposition~\ref{prop:cin} we only consider second-order feature crossing $\vect{c}$ (Equation~\ref{eq:conj}) and second-order entry-wise product of embedded vectors $\vect{e}$ (Equation~\ref{eq:cin}).
\begin{proof}[Proof of Proposition~\ref{prop:cin}]
%	\textbf{Proposition~\ref{prop:cin}}: 
%	Now we show that not all embeddings of $\vect{c}_{i,j}$ can be presented by $\vect{e}_{i,j}$.
%	Consider $\vect{e}_{i,j} = \vect{A}\mathbf{x} \circ \vect{B}\vect{y}$, and a embedded cross feature $\vect{C}\vect{z}$ where $\vect{z} = \code{vec}(\vect{x}\otimes\vect{y})$.
%\wms{
	First, we consider a weak version of Proposition~\ref{prop:cin}:
	\begin{prop}
		\label{prop:cin1}
		There exist at least one embedding matrix $\vect{C}$ with $D$ rows so that: there do not exist any embedding matrices $\vect{A}$ and $\vect{B}$ that satisfy the following equation:
		\begin{equation}
		\vect{Ax}\circ\vect{By} = \vect{Cz},
		\end{equation}
		for all binary vectors $\vect{x}$, $\vect{y}$ and their crossing $\vect{z}$.
	\end{prop}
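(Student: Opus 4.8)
The plan is to reduce the functional equation to a combinatorial statement about the columns of the embedding matrices. Since we only need to produce a single $\vect{C}$ for which no $\vect{A},\vect{B}$ work, it suffices to derive a contradiction already on the one-hot (indicator) vectors $\vect{x},\vect{y}$, as this only makes the requirement easier to violate; this is moreover the regime relevant to the multi-field categorical encoding, in which each field is one-hot. First I would observe that for one-hot $\vect{x},\vect{y}$ with active indices $i,j$, the products $\vect{Ax}=\vect{a}_i$ and $\vect{By}=\vect{b}_j$ simply select columns of $\vect{A}$ and $\vect{B}$, while the crossing $\vect{z}=\code{vec}(\vect{x}\otimes\vect{y})$ is again one-hot and selects the column $\vect{c}_{ij}$ of $\vect{C}$ indexed by the pair $(i,j)$. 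Hence the condition ``$\vect{Ax}\circ\vect{By}=\vect{Cz}$ for all such $\vect{x},\vect{y}$'' is equivalent to the system $\vect{a}_i\circ\vect{b}_j=\vect{c}_{ij}$ holding for every pair $(i,j)$.

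Next I would read this system one coordinate at a time. Fixing a row index $d\in\{1,\dots,D\}$, the equation reads $(\vect{a}_i)_d\,(\vect{b}_j)_d=(\vect{c}_{ij})_d$; collecting the right-hand sides into the slice matrix $M^{(d)}$ with entries $M^{(d)}_{ij}=(\vect{c}_{ij})_d$, representability forces $M^{(d)}$ to be the outer product of $\big((\vect{a}_i)_d\big)_i$ with $\big((\vect{b}_j)_d\big)_j$, and therefore to have rank at most one. This rank-one obstruction is the heart of the argument: the Hadamard product $\vect{a}_i\circ\vect{b}_j$ can only realize cross-embeddings whose every coordinate slice is rank one, whereas the columns of a general $\vect{C}$ are entirely unconstrained.

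Finally I would exhibit a concrete $\vect{C}$ violating this necessary condition: fixing any coordinate $d$ and two values of each index, set the corresponding slice equal to the $2\times2$ identity block, so that $(\vect{c}_{11})_d=(\vect{c}_{22})_d=1$ and $(\vect{c}_{12})_d=(\vect{c}_{21})_d=0$. This slice has rank two, so by the previous paragraph no embedding matrices $\vect{A},\vect{B}$ can satisfy the system, which proves that at least one such $\vect{C}$ exists. I expect the only delicate point to be the bookkeeping in the reduction step---confirming that one-hot inputs keep $\vect{z}$ one-hot and align the column index of $\vect{C}$ with the pair $(i,j)$---after which the rank-two obstruction is immediate. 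The strengthening to \emph{infinitely many} $\vect{C}$ needed for Proposition~\ref{prop:cin} then follows at once, since the rank-two slice persists under arbitrary nonzero rescaling and under sufficiently small perturbations of the block, each choice yielding a distinct inadmissible $\vect{C}$.
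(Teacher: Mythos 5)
Your proof is correct and follows essentially the same route as the paper's: restrict to one-hot inputs to obtain the column system $\vect{a}_i\circ\vect{b}_j=\vect{c}_{ij}$, then exhibit a $2\times 2$ obstruction that no Hadamard product of columns can realize. Your rank-one-slice formulation with the explicit identity block is in fact a cleaner rendering of the paper's entry-wise-ratio argument, since it avoids the potentially undefined entry-wise divisions by zero and names a concrete counterexample where the paper only asserts that one can be constructed.
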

	We proof Proposition~\ref{prop:cin1} by contradiction. 
%	Proposition~\ref{prop:cin}.
	Consider its opposite proposition:
	\begin{prop}[Opposite of Proposition~\ref{prop:cin1}]
		\label{prop:neg}
		For all embedding matrices $\vect{C}$ with $D$ rows, there exist embedding matrices $\vect{A}$ and $\vect{B}$ that satisfy the following equation:
		\begin{equation}
		\vect{Ax}\circ\vect{By} = \vect{Cz},
		\end{equation}
		for all binary vectors $\vect{x}$, $\vect{y}$ and their crossing $\vect{z}$.	
	\end{prop}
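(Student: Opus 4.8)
The plan is to take Proposition~\ref{prop:neg} at face value --- as the contradiction hypothesis for Proposition~\ref{prop:cin1} --- and to show that its universal promise forces a structural condition that fails for most $\vect{C}$. First I would make the matrix identity concrete using the multi-field categorical encoding. Since $\vect{Ax}\circ\vect{By}=\vect{Cz}$ is required for \emph{all} binary $\vect{x},\vect{y}$, it must hold in particular for the one-hot vectors $\vect{x}=\vect{e}_p$ and $\vect{y}=\vect{e}_q$; this is a legitimate specialization whether the fields are produced by one-hot encoding or the hashing trick. For such inputs the crossing is itself one-hot, $\vect{z}=\code{vec}(\vect{e}_p\otimes\vect{e}_q)=\vect{e}_{pq}$, so $\vect{Ax},\vect{By},\vect{Cz}$ reduce to single columns and the identity becomes
\begin{equation}
\vect{a}_p\circ\vect{b}_q=\vect{c}_{pq}\qquad\text{for all }p,q,
\end{equation}
where $\vect{a}_p,\vect{b}_q,\vect{c}_{pq}$ are the corresponding columns of $\vect{A},\vect{B},\vect{C}$.

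Next I would read this identity one row at a time. Writing $a_p^{(d)},b_q^{(d)},c_{pq}^{(d)}$ for the $d$-th entries, the Hadamard product gives $c_{pq}^{(d)}=a_p^{(d)}\,b_q^{(d)}$ for every $d\in\{1,\dots,D\}$. Collecting the entries of the $d$-th row into a matrix $\vect{M}_d$ with $(\vect{M}_d)_{pq}=c_{pq}^{(d)}$, this factorization says $\vect{M}_d=\vect{u}_d\vect{v}_d^{\mathrm{T}}$ with $\vect{u}_d=(a_p^{(d)})_p$ and $\vect{v}_d=(b_q^{(d)})_q$, hence $\vect{M}_d$ has rank at most one. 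So the necessary consequence of Proposition~\ref{prop:neg} is that every row of every admissible $\vect{C}$, reshaped into its $m\times n$ slice, is rank-one.

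The contradiction is then obtained by instantiating the universal quantifier of Proposition~\ref{prop:neg} at a $\vect{C}$ that violates this. Taking the minimal case $m=n=2$, I would choose $\vect{C}$ whose $d$-th row reshapes to $\vect{M}_d=\bigl(\begin{smallmatrix}1&0\\0&1\end{smallmatrix}\bigr)$, which has rank two. Proposition~\ref{prop:neg} would nonetheless supply $\vect{A},\vect{B}$ forcing this slice to be rank-one --- impossible --- so the hypothesis is untenable and Proposition~\ref{prop:cin1} holds. To upgrade to Proposition~\ref{prop:cin}, I would note that having a rank-$\ge 2$ slice is an open, scaling-invariant condition: perturbing or rescaling the bad row yields a continuum of such $\vect{C}$, giving infinitely many matrices for which no $\vect{A},\vect{B}$ exist.

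The main obstacle I anticipate is the encoding bookkeeping in the first step rather than the rank argument: one must justify that testing the identity on one-hot inputs loses nothing (because the statement quantifies over all binary $\vect{x},\vect{y}$) and verify that the crossing of two one-hot vectors is the single-index one-hot vector $\vect{e}_{pq}$, so that $\vect{Ax},\vect{By},\vect{Cz}$ genuinely collapse to columns. Once this is pinned down, the per-coordinate factorization $c_{pq}^{(d)}=a_p^{(d)}b_q^{(d)}$ and the rank-one obstruction follow immediately, and it is exactly the ``for all $\vect{C}$'' in Proposition~\ref{prop:neg} that allows one ill-chosen matrix to defeat it.
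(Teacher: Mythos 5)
Your proposal is correct, and it shares the paper's overall skeleton---refute Proposition~\ref{prop:neg} by specializing the identity to one-hot $\vect{x},\vect{y}$, collapsing $\vect{Ax}\circ\vect{By}=\vect{Cz}$ to the column equation $\vect{a}_p\circ\vect{b}_q=\vect{c}_{pq}$, and then exhibiting a $\vect{C}$ that no $\vect{A},\vect{B}$ can match---but the key step you use to manufacture the contradiction is genuinely different and in fact more robust. The paper compares entry-wise ratios: it picks $\vect{C}$ with $\vect{c}_{ik}/\vect{c}_{jk}\neq\vect{c}_{il}/\vect{c}_{jl}$ and derives the absurdity $\vect{a}_i/\vect{a}_j\neq\vect{a}_i/\vect{a}_j$, an argument that silently assumes the relevant entries are nonzero so that entry-wise division is defined. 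Your reformulation---reading the column equation one coordinate at a time, so that each slice $(\vect{M}_d)_{pq}=c^{(d)}_{pq}$ must factor as the outer product $\vect{u}_d\vect{v}_d^{\mathrm{T}}$ and hence have rank at most one---needs no division at all; the two conditions are equivalent exactly where division is legal, since the paper's ratio inequality is a nonvanishing $2\times 2$ minor of the slice. Notably, your witness slice $\bigl(\begin{smallmatrix}1&0\\0&1\end{smallmatrix}\bigr)$ contains zeros and could not even be plugged into the paper's ratio test, yet the rank obstruction dispatches it immediately, so your version quietly repairs a small rigor gap in the paper's argument. Your closing observation that having a rank-$\ge 2$ slice is an open, scaling-invariant condition also substantiates the ``infinitely many $\vect{C}$'s'' upgrade to Proposition~\ref{prop:cin} more cleanly than the paper's one-line assertion. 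The one piece of bookkeeping that must be (and is, in your plan) made explicit is that the crossing of $\vect{e}_p$ and $\vect{e}_q$ under the vectorization in Equation~\eqref{eq:conj} is the one-hot vector indexed by the pair $(p,q)$, so that $\vect{Ax}$, $\vect{By}$, $\vect{Cz}$ genuinely collapse to single columns and the specialization loses nothing.
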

%}
%	Assume Proposition~\ref{prop:neg} is true. Any $\vect{e}$ can be expressed as:
%	\begin{equation*}
%		\vect{e} = \vect{A} \vect{x} \circ \vect{B} \vect{y},
%	\end{equation*}
%	where $\vect{A}\vect{x}\in\mathbb{R}^D$ and $\vect{B}\vect{y}\in\mathbb{R}^D$.
	For simplicity, here we consider the cases where both $\vect{x}$ and $\vect{y}$ only have one hot bit, i.e., only one entry is `1' and others are `0'.
	Proposition~\ref{prop:neg} is correct in other cases unless it is true in this case.
	
%	Assume $\vect{A}$ and $\vect{B}$ are the embedding matrices that satisfy Proposition~\ref{prop:neg}.
	Let the $i$-th bit of $\vect{x}$ and $k$-th bit of $\vect{y}$ are `1'. To ease discussion, we denote the hot bit of the resulting crossing $\vect{c}$ as its $ik$-th bit.
	Further, denote $\vect{a}_i$ as the $i$-th column of $\vect{A}$, $\vect{b}_k$ the $k$-th column of $\vect{B}$, and $\vect{c}_{ik}$ the $ik$-th column of $\vect{C}$.
	Proposition~\ref{prop:neg} necessarily leads to that, 
	for any $\vect{C}$, we can find $\vect{A}$ and $\vect{B}$ to satisfy:
%	for any $i$, $k$ and $\vect{C}$, we have:
	\begin{equation}
		\vect{a}_i \circ \vect{b}_k = \vect{c}_{ik},
	\end{equation}
	for all $i$ and $k$.
%	recalling that $\vect{x}$, $\vect{y}$ and $\vect{z}$ are all binary vectors.
	Now, consider two instances of $\vect{x}$, with the $i$- and $j$-th bits set as `1', respectively, and similarly two instances of $\vect{y}$, with the $k$- and $l$-th bits set as `1', respectively.
	We have four resulting equations:
	\begin{eqnarray*}
		\vect{a}_i \circ \vect{b}_k  =  \vect{c}_{ik}, &
	    \vect{a}_i \circ \vect{b}_l  =  \vect{c}_{il}, \\
	    \vect{a}_j \circ \vect{b}_k  =  \vect{c}_{jk}, &
	    \vect{a}_j \circ \vect{b}_l  =  \vect{c}_{jl}.
	\end{eqnarray*}
	
	Since $\vect{C}$ is an arbitrary embedding matrix, we can easily make it satisfy:
	\begin{equation}
		\label{eq:conf1}
		\vect{c}_{ik} / \vect{c}_{jk} \neq \vect{c}_{il} / \vect{c}_{jl},
	\end{equation}
	where $/$ denotes the entry-wise division of vectors.
	Equation~\eqref{eq:conf1} leads to:
	\begin{eqnarray*}
		& \vect{c}_{ik} / \vect{c}_{jk} =   (\vect{a}_i \circ \vect{b}_k) / (\vect{a}_j \circ \vect{b}_k) =
		\vect{a}_i / \vect{a}_j \\
		\neq &	\vect{c}_{il} / \vect{c}_{jl} =  (\vect{a}_i \circ \vect{b}_l) / (\vect{a}_j \circ \vect{b}_l)
		=  \vect{a}_i / \vect{a}_j.
	\end{eqnarray*}
	It leads to $\vect{a}_i / \vect{a}_j \neq \vect{a}_i / \vect{a}_j$ which apparently does not hold.
	Hence, Proposition~\ref{prop:neg} is false, which proofs that its opposite, namely Proposition~\ref{prop:cin1}, is true.
	
	Furthermore, we can construct infinitely many $\vect{C}$'s to satisfy Equation~\ref{eq:conf1}.
	Every such $\vect{C}$ falsifies Proposition~\ref{prop:neg}, and hence makes Proposition~\ref{prop:cin1} true.
	This verifies Proposition~\ref{prop:cin}.
\end{proof}

\section{Details of Experimental Setup}
\label{app:setting}

\subsection{Experimental Environment}
All experiments are carried out on a workstation with Intel(R) Xeon(R) CPU (E5-2630 v4 @ 2.20GHz, 24 cores), 
256G memory and 8T hard disk.

\subsection{Setup}

\paragraph{AutoCross Setup}
The hyper-parameters of AutoCross are the number of data blocks $N$ in successive mini-batch gradient descent, the levels of granularity for multi-granularity discretization, and the termination condition.
%and they can all be determined by a rule-based routine depending on the data size and computing environment.
The first two are determined by a rule-based mechanism.
%In our experiments, we use 
As a result, $N$ are set to $2\sum_{k=0}^{\lceil\log_2n\rceil-1}2^k$ for relatively small datasets, namely \code{Bank}, \code{Adult}, \code{Credit} and \code{Employee}.
This indicates that at most $50\%$ of the training data will be used in successive mini-batch gradient descent.
For other datasets, $N = 5\sum_{k=0}^{\lceil\log_2n\rceil-1}2^k$, which corresponds to a maximum of $20\%$ sample ratio.
With respect to the levels of granularity, AutoCross uses $\{10^i\}_{i=1}^3$ for all datasets.
As for the termination condition, we only invoke the performance condition, i.e., AutoCross terminates only if newly added cross feature leads to a performance degradation.

\paragraph{Logistic Regression Setup:}
logistic regression model is used in \textbf{LR (base)}, \textbf{AC+LR}, and \textbf{CMI+LR} methods.
The feature set evaluation of AutoCross also uses LR models.
We use our self-developed LR method in the experiments.
There are only three hyper-parameters: learning rate $\alpha\in [0.005, 1]$, L1 penalty $\lambda_1\in[1e-4,10]$, and L2 penalty $\lambda_2\in[1e-4,10]$.
%An LR model has many hyper-parameters that can effect the final performance, such as learning rate, regularization penalties.
In our experiment, as well as the real-world application of AutoCross,
we invoke a hyper-parameter tuning procedure before feature generation.
Log-grid search is used to find the optimal hyper-parameters.
They are used in \textbf{LR (base)}, \textbf{AC+LR}, and \textbf{CMI+LR} methods, as well as the LRs in AutoCross.

\paragraph{CMI+LR Setup:}
we only test \textbf{CMI+LR} on benchmark datasets since CMI cannot handle multi-value categorical features. For its feature generation method~\cite{chapelle2015simple},
we use the multi-granularity discretization to convert numerical features.
In order to ensure the accuracy of feature evaluation, we use all training data to estimate CMI.
An exception is the \code{Criteo} dataset, for which we set the subsample  ratio to $10\%$.
We set the maximal cross feature number to 15.

\paragraph{Deep Model Setup:}
we use the open-source implementation of xDeepFM (\textit{https://github.com/Leavingseason/xDeepFM}) in \textbf{xDeepFM} method, and use the deep component in \textbf{AC+W\&D} and \textbf{Deep} methods.
Hyper-parameters are set as the xDeepFM paper~\cite{lian2018xdeepfm} suggested.
To be more specific, we use $0.001$ as the learning rate,  Adam~\cite{kingma2014adam} with mini-batch size 4096 as the optimization method, 0.0001 as the L2 regularization penalty, 400 layers for deep component, 200/100 layers for compressed interaction network for \code{Criteo}/other datasets, and 10 as the dimension of field embedding.
Since validation data is not need in \textbf{xDeepFM} and \textbf{Deep}, we do not split the training set and use all of it in model training.

\subsection{Data Sets Availability}
\label{app:data}

\begin{itemize}[leftmargin = 10px]
\item adult: 

\url{https://archive.ics.uci.edu/ml/datasets/adult}

\item Bank: 

\url{https://www.kaggle.com/brijbhushannanda1979/bank-data}

\item Credit: 

\url{https://www.kaggle.com/c/GiveMeSomeCredit/data}

\item Employee: 

\url{https://www.kaggle.com/c/amazon-employee-access-challenge/data}

\item Criteo: 

\url{https://www.kaggle.com/c/criteo-display-ad-challenge/data}
\end{itemize}

Due to secrecy agreement,
real-world business datasets are not public available.

\end{document}